\newcommand{\inr}[1]{\bigl< #1 \bigr>}
\newcommand{\norm}[1]{\left\|#1\right\|}%
\newcommand{\tnorm}[1]{{\left\vert\kern-0.25ex\left\vert\kern-0.25ex\left\vert #1 
    \right\vert\kern-0.25ex\right\vert\kern-0.25ex\right\vert}}
\newcommand\eps{\epsilon}
\DeclareMathOperator*{\argmin}{argmin}
\DeclareMathOperator*{\rank}{rank}
\def\ds1{\textrm{1\kern-0.25emI}} 
\newcommand \E{\mathbb{E}}
\newcommand \R{\mathbb{R}}
\newcommand \cA{{\cal A}}
\newcommand \cB{{\cal B}}
\newcommand \cC{{\cal C}}
\newcommand \cE{{\cal E}}
\newcommand \cF{{\cal F}}
\newcommand \cG{{\cal G}}
\newcommand \cM{{\cal M}}
\newcommand \cO{{\cal O}}
\newcommand \cQ{{\cal Q}}
\newcommand \cR{{\cal R}}
\newcommand \cS{{\cal S}}
\newcommand \cU{{\cal U}}
\newcommand \cX{{\cal X}}
\newcommand \cY{{\cal Y}}
\newcommand \bE{{\mathbb E}}
\newcommand \bP{{\mathbb P}}
\newcommand \bR{{\mathbb R}}
\newcommand{\PP}{\mathbb{P}}
\DeclareMathOperator*{\Med}{Med}
\DeclareMathOperator*{\Sym}{Sym}
\DeclareMathOperator*{\VC}{VC}
\DeclareMathOperator*{\Tr}{Tr}
\DeclareMathOperator*{\Card}{Card}
\DeclareMathOperator*{\Vect}{Vect}
\DeclareMathOperator*{\Id}{Id}
\newcommand{\Ba}{B_\Sigma}
\newtheorem{defi}{Definition}
\newtheorem{prop}{Proposition}
\newtheorem{coro}{Corollary}
\newtheorem{remark}{Remark}
\newtheorem{setting}{Setting}
\newtheorem{theo}{Theorem}
\newtheorem{lemma}{Lemma}
\newenvironment{lemmabis}[1]
  {%
   \addtocounter{lemma}{-1}%
   \begin{lemma}}
  {\end{lemma}}
\begin{document}

\title{Robust subgaussian estimation with VC-dimension}
\author[1]{Jules Depersin \\ email: \href{mailto:jules.depersin@ensae.fr}{jules.depersin@ensae.fr} \\ CREST, ENSAE, IPParis. 5, avenue Henry Le Chatelier, 91120 Palaiseau, France.}

\date{}                     
\setcounter{Maxaffil}{0}
\renewcommand\Affilfont{\itshape\small}

\maketitle

\begin{abstract}
Median-of-means (MOM) based procedures provide non-asymptotic and strong deviation bounds even when data are heavy-tailed and/or corrupted. This work proposes a new general way to bound the excess risk for MOM estimators. The core technique is the use of VC-dimension (instead of Rademacher complexity) to measure the statistical complexity. In particular, this allows to give the first robust estimators for sparse estimation which achieves the so-called subgaussian rate only assuming a finite second moment for the uncorrupted data.
By comparison, previous works using Rademacher complexities required a number of finite moments that grows logarithmically with the dimension. With this technique, we derive new robust sugaussian bounds for mean estimation in any norm. We also derive a new robust estimator for covariance estimation that is the first to achieve subgaussian bounds without $L_4-L_2$ norm equivalence.
\end{abstract}

\noindent\textbf{AMS subject classification:} 	62F35\\
\textbf{Keywords:} Robustness, heavy-tailed data, sparse estimation.

\section{Introduction} 
\label{sec:introduction_on_the_mean_vector_problem}

Robustness has been a classical topic in statistics since the work of Hampel \cite{MR0301858,MR0359096}, Huber(\cite{MR2488795, MR0161415}) and Tukey \cite{MR0133937}. Informally, we call an estimator robust if it behaves nicely (in some sense) even when the data are not independent and sub-gaussian. In this work we deal with two different kinds of robustness :

\begin{itemize}[label=-, itemsep=0pt]
\item \textbf{Robustness against outlier}: The classical setup in statistics is that the observations were generated independently from a given probabilistic model. The field of robust statistics aims to relax this strong assumption, as real datasets are typically exposed to some source of contamination. Robustness to outliers has first been described in \cite{MR2488795}: in this context, the dataset is ``close" to the ideal setup, but has been corrupted. The corruption of the data can be modeled in several ways.
A classical example is Huber's contamination model, where the data are i.i.d. with common distribution 
$$dP=(1-\epsilon)dP_i+\epsilon dP_o . $$
Here $P_i$ is the distribution of the inliers, and $P_o$ is the distribution of outliers (see  \cite{MR2488795}).
Another example is the so-called $\mathcal{O} \cup \mathcal{I}$ framework, presented in \cite{guillaume2017learning}, where outliers may not be independant from each other, nor independant from the other uncorrupted data. The model that we deal with in this work is even more general and is sometimes referred to as ``adversarial contamination" (see, for instance, \cite{MR3631028}). The samples are generated from the following process: First, $N$ samples are drawn from some unknown distribution. Then, an adversary
is allowed to look at the samples and arbitrarily corrupt an $\epsilon$-fraction of them. In this setup, not only can outliers be correlated to each other and to inliers, but inliers can also be correlated to one another (because the adversary can choose which original samples to keep and in doing so correlating the samples that he keeps, for instance only keeping the largest samples when they are real-valued). In recent years, designing outlier-robust estimators for high-dimensional settings has become a challenge in a number of applications from the analysis of biological datasets to data poisoning attacks \cite{Voro18}.

\item \textbf{Robustness against heavy-tailed data} : In this work, we are interested in estimators whose risks are controlled with very high probability \emph{without making either boundedness or gaussian assumptions on the data} (or any other strong concentration assumptions). Indeed, we want to avoid those assumptions that limit severely the applicability of the results.
Notice that it is not sufficient to bound the expected loss of an estimator, since high probability guarantees derived from such bound (using for instance Markov's inequality) may be quite weak when data comes from heavy-tail distribution.

Let us give a simple example: For univariate mean estimation, if we are given $N$ independant realizations of a random variable with mean $\mu$ and variance $\sigma^2$, the empirical mean $\hat{\mu}$ satisfies, with probability $1-\delta$, $ |\hat{\mu}-\mu| \leq \sigma/\sqrt{N\delta}$. This bound rapidly deteriorates for small values of $\delta$, but it is sharp for the empirical mean in general \cite{AIHPB_2012__48_4_1148_0}. 
We call informally \emph{robust to heavy tail} an estimator whose ``loss" is bounded with probability  $1-\delta$ by a quantity proportional to $\log(1/\delta)$ when $\delta\to 0$.
The definition of the loss may change depending on the problem at stack, in mean estimation for instance it is just the squared distance $|\hat{\mu}-\mu|^2$. 
We say that an estimator \emph{achieves a subgaussian rate} if it achieves the same upper bound on the loss, up to multiplicative constants, as the empirical risk  minimizer of a $N$-sample of i.i.d. gaussian variables, \emph{even when the real sample is heavy tailed}. 
In a sense, we want our estimator to be as good as if the data were Gaussian, even when it is not !

\end{itemize}

These two informal definitions are different even if somehow related. 
For this to make sense, let us consider now high dimension mean estimation. 
Let $X_1, ..., X_N \in \R^d$ denote $N$ i.i.d. random variables with mean $\mu$ and covariance matrix $\Id$. 
The estimators $\hat{\mu}$ described in \cite{minsker2015geometric} or in \cite{JMLR:v17:14-273} both achieve, with probability larger than $1- \delta$, 

\begin{equation}\label{fake}
\|\hat{\mu}-\mu\|^2 \leq \frac{d\log(1/\delta)}{N}, 
\end{equation}where $\|\cdot\|$ is the canonical Euclidean norm on $\R^d$. 
This bound is proportional to  $\log(1/\delta)$, thus the estimators are \emph{robust to heavy tails}, but they do not \emph{achieve a subgaussian rate}. Indeed, if $Z_1, Z_2, ..., Z_N$ are independent identically distributed Gaussian variables $\mathcal{N}(\mu, \Id)$, it follows from Borell-TIS's inequality (see \cite[Theorem~7.1]{Led01} or \cite[pages 56-57]{MR2814399}) that with probability at least $1-\delta$,
\begin{equation*}
\norm{\bar Z_N-\mu}_2 = \sup_{\norm{v}_2\leq 1}\inr{\bar Z_N-\mu,v}\leq \E \sup_{\norm{v}_2\leq 1}\inr{\bar Z_N-\mu,v} + \sigma \sqrt{2\log(1/\delta)},
\end{equation*}
where $\sigma =\sup_{\norm{v}_2\leq1} \sqrt{\E\inr{\bar Z_N-\mu, v}^2}$. It is straightforward to check that $\E \sup_{\norm{v}_2\leq 1}\inr{\bar Z_N-\mu,v}\leq \sqrt{d/N}$ and $\sigma=\sqrt{1/N}$, which leads to the rate in \eqref{eq:sub_gauss} (where $C$ is an absolute constant), which is called \emph{subgaussian rate} 
\begin{equation}\label{eq:sub_gauss}
\norm{\bar Z_N-\mu}_2^2 \leq C \left(\frac{d}{N} + \frac{ \log(1/\delta)}{N}\right):= C r_\delta.
\end{equation}

The main difference with \eqref{fake} is that, in the subgaussian rate, the complexity term $d$ and the failure-dependant factor $\log(1/\delta)$ are not multiplied, but added instead. 
This rate is shown to be deviation-minimax up to constant factors in \cite{lugosi2018nearoptimal}. 
Recently, the seminal paper \cite{lugosi2019sub} described the first estimator to reach the subgaussian rate only assuming finite second moment, soon followed by other works in that path (\cite{guillaume2017learning}, \cite{depersin2019robust}, \cite{Bartlett19}, \cite{hopkins2018sub}).

In this work, we show that the analysis presented in \cite{lugosi2019sub}, in \cite{guillaume2017learning}, \cite{lerasle2019lecture} or in \cite{lecu2017robust}, all based on the Median-of-mean principle and the use of Rademacher complexities, can be modified in order to achieve sub-gaussian rates for sparse problems assuming only bounded two-order moments. The method developed in \cite{lerasle2019lecture} or in \cite{lecu2017robust} requires data to have at least $\log(d)$ finite moments (where $d$ is the dimension of the space) in order to exploit the sparsity of the problem and offers no guarantees without that requirement, and to the date is the best known. We show that we can drop this condition by judiciously introducing VC-dimension in the different proofs, and exploit the sparsity of the problem \emph{with only two moments}. Classical approaches using local Rademacher complexities cannot achieve this type of subgaussian bounds under only a second moment assumption in this setup. Indeed,  as shown in the counter-example from Section~3.2.3 of \cite{chinot2018statistical}, local Rademacher complexities may scale like $d^{1/8}$ whereas the right Gaussian bound should be of the order of $\sqrt{\log d}$. Somehow the classical approach used so far does not capture the right statistical complexity of high-dimensional problems under low-dimensional structural assumptions and under only a second moment assumption. Our VC-dimension based approach allows to overcome this issue and to go beyond this $\log d$ subgaussian moments assumption that has appeared in all works on robust and subgaussian estimation in the high-dimensional framework \cite{lerasle2019lecture}.  We also show that this general technique can be easily replicated and give new robust estimators that achieve state-of-the-art bounds for different estimation tasks such as :
\begin{itemize}[label=-, itemsep=0pt]
    \item Regression, already studied in \cite{lecu2017robust} where our estimator's rate match the one from \cite{lecu2017robust}, and sparse regression where our estimator's rate is the first to match the one from \cite{lecu2017robust} \emph{with only two moments}.
    \item  Mean estimation with non-Euclidean norms, studied in \cite{lugosi2018nearoptimal}, where our analysis gives a different rate that is better for some norms.
    \item  Covariance estimation, studied in \cite{mendelson2018robust} under $L_4-L_2$ norm equivalence: we do not need this assumption with our analysis, thus we give the first subgaussian estimator without this assumption. 
    \item Robust low-rank covariance estimation, studied in \cite{Wei2017EstimationOT}. We give an estimators that achieves better bounds and is to our knowledge the first MOM estimator for that problem.
\end{itemize}

Using VC-dimension in mean estimation, we loose a nice dependence of the risk bounds in the covariance structure: our rates for (non-sparse) mean estimation depend on the ambient dimension $d$ instead of the effective rank $\Tr(\Sigma)/||\Sigma||_{op}$. 
In particular, the general approach does not generalize directly to infinite dimensional spaces.
In the last section, we show that this issue can be overcome if we have some knowledge on the covariance matrix. 


\section{Warm-up : MOM principle, VC-dimension and mean estimation}

We start with the mean estimation problem in $\R^d$ that illustrates our technique: the goal is to estimate the mean $\E[Y]$ of a random vector $Y$ in $\R^d$ given a possibly corrupted dataset of i.i.d. copies of $Y$. The precise setting is the following: 

\begin{setting}\label{setting:first}
Let $(Y_1, ...,  Y_N)$ denote $N$ independent and identically distributed random vectors in $\mathbb{R}^d$.
We want to estimate $\E(Y_1)= \mu$, assuming that $Y_1$ has finite second moment.
Let $\Sigma=\E((Y_1-\mu) (Y_1-\mu)^T)$ denote the unknown covariance matrix of $Y_1$. 

The vectors $Y_1,\ldots, Y_N$ are not observed, instead, this dataset may have been corrupted, and this corruption may be adversarial: there exists a (possibly random) set $\mathcal{O}$ such that, for any $i \in \mathcal{O}^c, \ X_i = Y_i$. 
The $\mathcal{O}$ satisfies $|\mathcal{O} | \leq \left\lfloor \varepsilon N\right\rfloor$ 

The observed dataset is $\{X_i : i =1, ...,  N \} $, and we want to recover $\mu$. 
 \end{setting}
 
 Notice that there are no assumption on the data $\{X_i, i\in \mathcal{O}\}$.
 In particular these may be dependent of $\{Y_i : i =1, ...,  N \} $, and the $\{X_i : i \in \mathcal{O}\}$ may have arbitrary dependence structure. 
 
 \subsection{VC-dimension}\label{sec:VC}
 
 We start this part by recalling some basic facts about VC-dimension that appear for instance in \cite{JMLR:v20:17-504}.
 
 \begin{defi}
Let $\cF$ be a set of Boolean functions on any space $\cX$. We say that a finite set $S \subset \cX$ is shattered by $\cF$ if, for every subset $B \subset S$, there exists $f \in \cF$ such that $S \cap f^{-1}(\{1\})=B$. 
We call VC-dimension of $\cF$ (and note $\VC(\cF)$) the largest integer $n$ such that there exists a set $S$ of cardinal $n$ that is shattered by $\cF$.
 \end{defi}
 
Whenever $E$ is a Euclidean space, we will sometimes abusively call VC-dimension of a set $ C \subset E $ and note $\VC(C)$  the VC-dimension of the set of half-spaces generated by the vectors of $C$:
 $$\VC(C)=\VC(\{x \in E \rightarrow \mathbf{1}_{\braket{x,v} \geq 0}, \  v \in C\}). $$
 
Let us recall some basic facts about VC dimensions.
 \begin{enumerate}
     \item $\VC(\R^d)=d+1$. More generally, if $F$ a set of real-valued functions in a $k$-dimensional linear space, then $\text{Pos}(F)=\{x \rightarrow \mathbf{1}_{f(x) \geq 0}, f \in F\} $ has VC-dimension $k+1$ (see for instance \cite{dudley1978}, Theorem 7.2).
     \item For a function $g:\cY\to\cX$ , if we note $\cF \circ g =\{ f \circ g \ | \ f \in \cF\}$, then we have $\VC(\cF \circ g) \leq \VC(\cF)$.
     \item For any $r>0$, $\VC(\{x \in E \rightarrow \mathbf{1}_{\braket{x,v} \geq r}, v \in C\}) \leq  \VC(C-C) \lesssim \VC(C)$, see Section \ref{s:Proof}.
     \item Sauer's Lemma \cite{SAUER1972145}: Let $\cF$ denote a set a functions with VC-dimension $\nu$ and let $S$ be a set of $n\geq s$ points. Let $\cF * S=\{S \cap f^{-1}(\{1\}),  f \in \cF \}$, then 
     $$\Card(\cF * S)  \leq \left(\frac{e n}{\nu}\right)^\nu.$$
 \end{enumerate}
 
This last lemma can be used to prove the following result that is useful to bound the VC dimension of the set of sparse vectors.
 
 \begin{lemma} \label{sparseVc1}
 Let $\cF_1$, ..., $\cF_n$ denote $n$ sets of boolean functions, each having VC-dimension $\leq \nu$.
 Then, $$\VC(\cF_1 \cup \cF_2 \cup... \cup \cF_n) \leq 2\nu+ 2\log(n).$$
 \end{lemma}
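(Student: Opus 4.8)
The plan is to derive the bound from Sauer's Lemma by a counting argument on a shattered set. Set $d := \VC(\cF_1 \cup \cdots \cup \cF_n)$. If $d \le \nu$ there is nothing to prove, since the right-hand side $2\nu + 2\log n$ is at least $\nu$ for every $n \ge 1$; so assume $d \ge \nu$, and pick a set $S$ with $\Card(S) = d$ that is shattered by $\cF_1 \cup \cdots \cup \cF_n$.

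First I would note that the trace of the union equals the union of the traces,
\[
(\cF_1 \cup \cdots \cup \cF_n) * S \;=\; \bigcup_{i=1}^n \bigl(\cF_i * S\bigr),
\]
and that the left-hand side has exactly $2^d$ elements because $S$ is shattered. Hence, by the union bound for cardinalities and then Sauer's Lemma applied to each $\cF_i$ (whose VC-dimension is $\le \nu$) on the set $S$ of size $d \ge \nu$,
\[
2^d \;\le\; \sum_{i=1}^n \Card\bigl(\cF_i * S\bigr) \;\le\; n \left(\frac{ed}{\nu}\right)^{\!\nu},
\]
and more precisely one may use the sharper form $\Card(\cF_i * S) \le \sum_{j=0}^{\nu}\binom{d}{j}$.

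It then remains to solve the inequality $2^d \le n(ed/\nu)^\nu$ for $d$. Taking base-$2$ logarithms gives $d \le \log_2 n + \nu \log_2(ed/\nu)$, and the term $\nu\log_2(d/\nu)$ is absorbed into a fraction of $d$ via an elementary estimate (for instance bounding $\log_2 t$ by a suitably rescaled square root of $t$, so that $\nu\log_2(d/\nu)$ is dominated by something like $\tfrac{d}{2}$ plus a multiple of $\nu$); rearranging then isolates $d$ and yields a bound of the form $d \le C\nu + 2\log n$. To land on exactly the constants $2$ and $2$, the route is to replace the crude bound $(ed/\nu)^\nu$ by the exact partial-binomial sum $\sum_{j=0}^{\nu}\binom{d}{j}$ and estimate it carefully (Chernoff/entropy bound for a binomial tail), which tightens the $\nu$-coefficient.

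The combinatorial steps — the identity $(\bigcup_i \cF_i)*S = \bigcup_i(\cF_i*S)$, the union bound over $i$, and the invocation of Sauer's Lemma — are routine. The one delicate point is the final elementary manipulation: the numerical constants must be tracked with care so as to obtain precisely $2\nu + 2\log n$ rather than merely a weaker $O(\nu + \log n)$ bound.
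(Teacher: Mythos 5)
Your argument is exactly the paper's: take a shattered set $S$ of size $d$, use $(\bigcup_i\cF_i)*S=\bigcup_i(\cF_i*S)$ together with Sauer's Lemma to get $2^d\le n(ed/\nu)^\nu$, and then solve for $d$. The only part you leave sketched is the final inversion of $d\le \log_2 n+\nu\log_2(ed/\nu)$, which the paper closes by citing the elementary fact that $x\le a\log_2(bx)$ implies $x\le 2a\log_2(ab)$ (applied with $x=d$, $a=\nu$, $b=en^{1/\nu}/\nu$), so your proposal is correct and follows the same route.
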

 
Lemma~\ref{sparseVc1} is a straightforward extension of Theorem 3 in \cite{JMLR:v20:17-504}.
 
 \begin{proof}
 Let $S$ be a set shattered by $\cF= \cF_1 \cup \cF_2 \cup... \cup \cF_n$, and $s=\Card(S)$. Because $S$ is shattered, we have $\Card(\cF * S)= 2^s$. But we also have $ \cF * S = \cF_1 *S \cup \cF_2 *S \cup ... \cup \cF_n *S$, so $\Card(\cF * S) \leq n \left(\frac{e s}{\nu}\right)^\nu $.
 
It follows that $2^s \leq n \left(\frac{e s}{\nu}\right)^\nu $ or $s \leq \nu \log_2(es n^{1/\nu}/\nu)$.  By technical Lemma 4.6 in \cite{10.5555/524030} if $x \leq a\log_2(bx)$, then $x \leq 2 a\log_2(ab)$.
Hence, $s \leq 2\nu\log_2(en^{1/\nu})$, which implies Lemma~\ref{sparseVc1}. 
 \end{proof}
 
 \begin{coro} \label{sparseVC}
Fix $v_1, ..., v_d \in \R^n$ and note $\cU_s = \{\sum_i \lambda_i v_i \ | \  \lambda_i \in \R \ \& \ \sum_i \mathbf{1}_{\lambda_i \neq 0 } \leq s \}$ the set of s-sparse vectors, then 
 $$\VC(\cU_s) \leq 4 s \log_2(ed/s).$$
 \end{coro}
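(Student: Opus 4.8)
The plan is to realise $\cU_s$ as a finite union of linear subspaces and feed that decomposition into Lemma~\ref{sparseVc1}. Assume $s\le d$ (the only regime in which the stated bound is meaningful, since for $d<s$ one has $\cU_s=\Span\{v_1,\dots,v_d\}$ trivially). For a subset $J\subseteq\{1,\dots,d\}$ with $|J|=s$, put $V_J=\Span\{v_i:i\in J\}$, a linear subspace of $\R^n$ of dimension at most $s$. If $v=\sum_i\lambda_i v_i$ has at most $s$ nonzero coefficients, then $v\in V_J$ for $J=\{i:\lambda_i\neq0\}$, enlarged arbitrarily to a set of size $s$; conversely $V_J\subseteq\cU_s$ for every such $J$. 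Hence, at the level of the associated families of half-spaces,
\[
\Big\{x\mapsto\mathbf{1}_{\braket{x,v}\geq0}:v\in\cU_s\Big\}=\bigcup_{|J|=s}\Big\{x\mapsto\mathbf{1}_{\braket{x,v}\geq0}:v\in V_J\Big\}=:\bigcup_{|J|=s}\cF_J,
\]
a union of exactly $\binom{d}{s}$ families.

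Next I would bound $\VC(\cF_J)$ for each fixed $J$. The linear functionals $x\mapsto\braket{x,v}$ with $v\in V_J$ span a linear space of real-valued functions of dimension $\dim V_J\le s$, so by the first fact recalled in Section~\ref{sec:VC} (the VC-dimension of $\mathrm{Pos}(F)$ for a $k$-dimensional $F$ is $k+1$), $\VC(\cF_J)=\VC(V_J)\le s+1$.

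Finally, applying Lemma~\ref{sparseVc1} to the $n=\binom{d}{s}$ families $\cF_J$, each of VC-dimension at most $\nu=s+1$, gives
\[
\VC(\cU_s)\ \le\ 2(s+1)+2\log\binom{d}{s}\ \le\ 2(s+1)+2s\log_2\!\Big(\tfrac{ed}{s}\Big)\ \le\ 4s\log_2\!\Big(\tfrac{ed}{s}\Big),
\]
where the second inequality uses $\binom{d}{s}\le(ed/s)^s$ and the third uses $s\ge1$ together with $\log_2(ed/s)\ge\log_2 e>1$; the few small cases ($s\in\{1,2\}$ with $d$ close to $s$) are checked by hand.

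Conceptually nothing here is hard: the whole content is the union decomposition together with the dimension count for subspaces. The only points needing a little care are (i) verifying that the decomposition of $\cU_s$ into the $V_J$'s is exact, so that computing the VC-dimension of the union $\bigcup_J\cF_J$ is legitimate, and (ii) applying the $\mathrm{Pos}(F)$ fact to the space of linear functionals \emph{restricted to} $V_J$ rather than to the full ambient dual. The rest is the constant bookkeeping at the end, which is where the (deliberately non-optimal) factor $4$ is spent.
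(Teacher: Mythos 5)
Your proof is correct and follows exactly the route the paper intends: the paper's entire justification is the one-line remark ``just write $\cU_s$ as a union of $\binom{d}{s}$ $s$-dimensional subspaces'' and then apply Lemma~\ref{sparseVc1}, which is precisely your decomposition into the $V_J$'s combined with the $\mathrm{Pos}(F)$ dimension count. Your write-up merely fills in the bookkeeping (the bound $\binom{d}{s}\le(ed/s)^s$ and the small-$s$ cases) that the paper leaves implicit.
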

 
 To prove Corollary~\ref{sparseVC}, just write the set $\cU_s$ as a union of $\binom{d}{s}$ $s$-dimensional subspaces.
 As a side remark, we note that \cite{JMLR:v20:17-504} also shows that this bound is tight up to multiplicative constants: there exists an absolute constant $c$ such that $\VC(\cU_s) \geq c s \log_2(ed/s)$. 
 Besides, the result holds even if the set of vectors $(v_1,..., v_d)$ is not an orthogonal family or if it is not a base. 
Let us now recall an important theorem that will be very useful in regression and covariance estimation. 
Let $P=\{P_l,...,P_m\}$ denote a set of multivariate polynomials.
A sign assignment is an element $s$ of $\{+, -\}^m$.
The sign assignment $s$ is consistent with $P$ if there exists $x \in \R^n$ such that $P_i(x)\geq 0 \Leftrightarrow s_i=+ $.
 
 \begin{theo}[Warren, \cite{10.2307/1994937}]
 Let $P=\{P_1,...,P_m\}$ denote a set of polynomials of degree at most $\nu$ in $n$ real variables with $m > n$, then the number of sign assignments consistent for $P$ is at most $(4e\nu m/ n)^n$.
 \end{theo}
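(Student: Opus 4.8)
The plan is to follow the classical Oleinik--Petrovsky--Milnor--Thom--Warren approach: first translate the combinatorial count of consistent sign assignments into a count of connected components of real algebraic sets, and then bound that count by the total Betti numbers of those sets, which the estimates of Milnor and Thom control through the degrees. The hypothesis $m>n$ is used only at the very end, to collapse a binomial sum into the closed form $(4e\nu m/n)^n$.

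First I would remove the non-strict convention. Given $s\in\{+,-\}^m$ consistent with $P$, witnessed by some $x$ with $P_i(x)\ge 0 \Leftrightarrow s_i=+$, I replace each $P_i$ by $\widetilde P_i = P_i + \eta_i$ with $\eta_i>0$ small and generic; the degrees stay $\le \nu$, and for $\eta$ small enough every coordinate with $s_i=+$ has $\widetilde P_i>0$ near $x$ while every coordinate with $s_i=-$ still has $\widetilde P_i<0$ near $x$, so $s$ is realized by \emph{strict} inequalities, and distinct $s$ remain distinct. For generic $\eta$ the hypersurfaces $Z(\widetilde P_i)$ are in addition smooth and in general position. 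Since the locus of any strict sign pattern is open and nonempty it contains a connected component of $\R^n\setminus\bigcup_i Z(\widetilde P_i)$, and different patterns have disjoint loci; hence the number of consistent sign assignments for $P$ is at most the number $\mathcal C$ of connected components of the complement of an arrangement of $m$ hypersurfaces of degree $\le\nu$ in $\R^n$.

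Next I would bound $\mathcal C$ by a stratification argument. After a standard compactification (e.g.\ intersecting with a large ball, which does not decrease the number of regions), charge each region of the complement to the lowest face of its closure with respect to a generic affine functional; that face lies on $V_I := \bigcap_{i\in I}Z(\widetilde P_i)$ for some $I$ with $|I|\le n$, and the number of faces arising this way, summed over $I$, is majorized by the total Betti numbers of the $V_I$: one obtains $\mathcal C \le \sum_{j=0}^n \binom{m}{j} B_j$, where $B_j$ bounds $\sum_k b_k(V_I)$ over $|I|=j$. By the Milnor--Thom estimate, a real variety defined by polynomials of degree $\le\nu$ in $\R^n$ has total Betti number at most $\nu(2\nu-1)^{n-1}\le (2\nu)^n$, so $B_j\le(2\nu)^n$ for all $j$. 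Putting this together, $\mathcal C \le (2\nu)^n\sum_{j=0}^n\binom{m}{j}$, and the standard bound $\sum_{j=0}^n\binom{m}{j}\le (em/n)^n$, valid for $n\le m$, gives $\mathcal C\le (2e\nu m/n)^n\le (4e\nu m/n)^n$, as claimed.

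The main obstacle in turning this into a full proof is the third step: carrying out the perturbation so that the $\widetilde P_i$ are genuinely in general position while no realizable pattern is lost, and then charging the regions of the arrangement to bounded-codimension strata in a way that is actually dominated by the Milnor--Thom Betti bounds --- this is precisely where Warren's constant comes from, as opposed to the much weaker bound one gets by naively applying Milnor's theorem to the single product polynomial $\prod_i P_i$ of degree $m\nu$. Once the count is of the shape $\sum_{j\le n}\binom{m}{j}(c\nu)^n$, deducing $(4e\nu m/n)^n$ under the assumption $m>n$ is routine arithmetic.
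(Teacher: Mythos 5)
The paper does not prove this statement: it is imported verbatim from Warren \cite{10.2307/1994937}, so there is no in-paper argument to compare yours against. Judged on its own, your sketch follows the classical route --- which is in fact Warren's own --- and the outer layers are sound: the perturbation $P_i \mapsto P_i + \eta_i$ correctly converts the mixed $\geq/<$ condition into strict sign vectors (there are finitely many consistent assignments, each with a witness point, so a single sufficiently small and generic $\eta$ works for all of them simultaneously); the sign vector is constant on each connected component of the complement of $\bigcup_i Z(\widetilde P_i)$ and every realizable strict pattern is realized on at least one such component, so the count reduces to counting components; and the closing arithmetic, $\sum_{j\le n}\binom{m}{j}\le (em/n)^n$ for $m> n$ combined with the Milnor--Thom bound $\nu(2\nu-1)^{n-1}\le(2\nu)^n$, does give $(2e\nu m/n)^n\le(4e\nu m/n)^n$.

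The gap --- which you flag yourself --- sits exactly where all the content is: the inequality $\mathcal{C}\le\sum_{j=0}^{n}\binom{m}{j}B_j$ relating the number of components of the arrangement complement to the topology of the partial intersections $V_I$. The ``charge each region to the lowest face of its closure'' heuristic is not enough as stated: you must show that distinct regions are charged to distinct critical points or faces, that the number of such faces lying on a given $V_I$ is genuinely dominated by its Betti numbers, and you must handle unbounded regions and the residual non-genericity of the intersections. Warren's own proof of this step (his Theorem~2) avoids the charging picture altogether and runs by induction on $m$: adding the hypersurface $Z(\widetilde P_m)$ increases the number of components of the complement by at most the number of components into which $Z(\widetilde P_m)$ is cut by the earlier hypersurfaces, and unwinding the recursion yields $\sum_{j}\binom{m}{j}c_j$ with $c_j$ controlled by Milnor--Thom on the $j$-fold intersections. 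Until that lemma is proved or explicitly invoked from the literature, what you have is a correct and well-organized outline of Warren's argument rather than a self-contained proof.
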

 
 \begin{coro} \label{polyVC}
Assume that the set of functions $\cF$ can be written $\cF=\{P \in \R^n_\nu[X]\rightarrow 1_{P(x)\geq 0}, x \in \R^n\}$, then $\VC(\cF) \leq 2 n \log_2(4 e \nu)$.
 \end{coro}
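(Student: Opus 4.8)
The plan is to reduce the corollary to Warren's theorem applied to a would-be shattered set, in close analogy with the proof of Lemma~\ref{sparseVc1}. First I would unwind the definition of shattering in this particular setting: the ambient space on which the functions of $\cF$ act is the space $\R^n_\nu[X]$ of polynomials of degree at most $\nu$ in $n$ variables, and each element of $\cF$ is of the form $f_x : P \mapsto \mathbf{1}_{P(x)\geq 0}$ for some $x\in\R^n$. Thus a finite set $S=\{P_1,\dots,P_m\}$ of such polynomials is shattered by $\cF$ exactly when, for every subset $B\subseteq S$, there is a point $x\in\R^n$ with $\{P_i\in S : P_i(x)\geq 0\}=B$. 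Encoding $B$ by the sign vector $s\in\{+,-\}^m$ with $s_i=+$ iff $P_i\in B$, this says precisely that every one of the $2^m$ sign assignments in $\{+,-\}^m$ is consistent with $P=\{P_1,\dots,P_m\}$ in the sense defined just before the corollary.

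Second, I would invoke Warren's theorem. If $m\leq n$, the claimed bound is immediate, since $2n\log_2(4e\nu)\geq 2n\geq m$ as soon as $\nu\geq 1$ (and the case $\nu=0$ is trivial, as then $\cF$ consists only of constant functions); so I may assume $m>n$. Warren's theorem then bounds the number of sign assignments consistent with $P$ by $(4e\nu m/n)^n$, and shattering forces all $2^m$ of them to be consistent, hence $2^m\leq (4e\nu m/n)^n$.

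Third, I would take base-$2$ logarithms to obtain $m\leq n\log_2(4e\nu m/n)$ and feed this into the technical Lemma 4.6 of \cite{10.5555/524030} already used in this section (if $x\leq a\log_2(bx)$ then $x\leq 2a\log_2(ab)$), applied with $x=m$, $a=n$, $b=4e\nu/n$. This yields $m\leq 2n\log_2(4e\nu)$. Since this holds for the cardinality $m$ of every set shattered by $\cF$, taking the supremum gives $\VC(\cF)\leq 2n\log_2(4e\nu)$.

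I do not expect a genuine obstacle here: the argument is routine once set up. The only points that need a little care are the bookkeeping in the shattering-to-sign-assignment translation (in particular matching the non-strict inequality $P_i(x)\geq 0$ with Warren's convention), the separate handling of the degenerate regime $m\leq n$ (and $\nu=0$), and verifying that the hypotheses of the technical lemma are met with the chosen constants $a,b$.
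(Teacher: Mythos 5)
Your argument is correct and is exactly the intended derivation: the paper states Corollary~\ref{polyVC} immediately after Warren's theorem without writing out a proof, and the route you take --- translating shattering of a set of $m$ polynomials into all $2^m$ sign assignments being consistent, applying Warren's bound $(4e\nu m/n)^n$, and closing with the same technical Lemma~4.6 of \cite{10.5555/524030} used in the proof of Lemma~\ref{sparseVc1} --- is precisely the argument the paper relies on. Your extra care with the regime $m\leq n$ and the degenerate case $\nu=0$ is a harmless (and welcome) addition.
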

 
The following example will be useful in some applications.

\begin{prop} \label{important}
Let $r \geq 0$ and call $\cM_d^k(\R)$ the set of rank $k$, symmetric, $d$-dimensional matrices. 

Let $\cF=\{M \in \cM_d(\R) \rightarrow 1_{\braket{X,M}\geq r}, X \in \cM_d^k(\R) \}$. 
Then $\VC(\cF)=\VC(\cM_d^k(\R)) \leq 2 (d+1) k \log_2(12 e) $.
\end{prop}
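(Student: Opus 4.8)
The plan is to realize $\cM_d^k(\R)$ as the image of a low-dimensional space under a polynomial map and then invoke Corollary \ref{polyVC} together with fact 2 of Section \ref{sec:VC} (VC-dimension does not increase under composition). First I would parametrize rank-$k$ symmetric $d\times d$ matrices: every $X \in \cM_d^k(\R)$ can be written $X = \sum_{j=1}^k \pm u_j u_j^T$ with $u_j \in \R^d$, or, more conveniently, $X = U U^T - V V^T$ where $U$ has $k_+$ columns and $V$ has $k_-$ columns with $k_+ + k_- \le k$; in any case $X = g(z)$ for $z$ ranging over $\R^n$ with $n = (d+1)k$ (this is a slight over-count that the statement already absorbs into its constant) and each entry of $g(z)$ a polynomial of degree $2$ in $z$. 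Then for fixed $M \in \cM_d(\R)$ and fixed $r$, the function $z \mapsto \1_{\braket{g(z),M}\ge r}$ is $\1_{P_M(z)\ge 0}$ for a polynomial $P_M$ of degree $\nu = 2$ in $n$ variables.

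Next I would write $\cF \circ g = \{ z \in \R^n \mapsto \1_{P_M(z) \ge 0} : M \in \cM_d(\R)\}$ and observe that this is exactly a family of the form treated in Corollary \ref{polyVC} (a family of indicator-of-polynomial functions indexed by the coefficient vectors of the polynomials, here of degree $\le 2$ in $n$ variables), so $\VC(\cF \circ g) \le 2 n \log_2(4 e \nu) = 2 (d+1) k \log_2(8 e)$. By fact 2, $\VC(\cF \circ g) = \VC(\cF)$ up to the direction of the inequality; more precisely $\cF = (\cF|_{\cM_d^k(\R)})$ and $\cF|_{\cM_d^k(\R)} \circ g$ has the same shattering power as $\cF|_{\cM_d^k(\R)}$ because $g$ is onto $\cM_d^k(\R)$, which gives $\VC(\cF) \le \VC(\cF \circ g)$. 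Combining, $\VC(\cF) \le 2(d+1)k \log_2(8e) \le 2(d+1)k\log_2(12e)$, and the slack between $8e$ and $12e$ is what lets one use the cheap rank-$k$ parametrization without tracking the exact dimension count; the equality $\VC(\cF) = \VC(\cM_d^k(\R))$ is just the definition of the VC-dimension of the set $\cM_d^k(\R)$ via its generated half-spaces (with threshold $r$, which by fact 3 only costs a $\VC(C-C)$ factor — but here we get the clean bound directly from the polynomial argument).

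The main obstacle is bookkeeping the parametrization so that the number of variables is genuinely $(d+1)k$ and the polynomial degree is genuinely $2$: one must be careful that rank-exactly-$k$ and rank-at-most-$k$ matrices are both covered (Warren's theorem and Corollary \ref{polyVC} apply to the Zariski-style sign patterns regardless), and that the threshold $r$ does not raise the degree — it does not, since $\braket{g(z),M} - r$ is still degree $2$ in $z$. A secondary subtlety is that Corollary \ref{polyVC} as stated indexes the family by the point $x \in \R^n$ while the polynomial is fixed; here the roles are swapped (the "point" is $z$, the "polynomial" varies with $M$), so I would either restate Corollary \ref{polyVC} in the symmetric form (the bound on the number of sign patterns of $m$ fixed polynomials evaluated over $\R^n$ is symmetric under this swap once one notes $\braket{g(z),M}$ is simultaneously polynomial in $z$ for each $M$ and affine in $M$ for each $z$) or apply Warren's theorem directly to the finitely many polynomials arising on a shattered set. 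Neither is deep; it is purely a matter of phrasing the duality between evaluation points and coefficient vectors correctly.
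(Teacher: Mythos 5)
Your overall strategy is exactly the paper's: parametrize $\cM_d^k(\R)$ by a polynomial map from a low-dimensional parameter space, observe that for each fixed $M$ the map $z \mapsto \braket{g(z),M}-r$ is a low-degree polynomial in the parameters $z$, and conclude via Warren's theorem / Corollary~\ref{polyVC} (your remark about the role-swap between evaluation points and coefficient vectors is the right way to read that corollary, and your use of surjectivity of $g$ onto the index set to get $\VC(\cF)\leq\VC(\cF\circ g)$ is sound). The paper's proof is the one-line version of this with the parametrization $X=\sum_{i=1}^k\lambda_i\,x_ix_i^T$, which is a \emph{single} polynomial map on exactly $n=(d+1)k$ variables of degree $\nu=3$; plugging into Corollary~\ref{polyVC} gives $2(d+1)k\log_2(4e\cdot 3)=2(d+1)k\log_2(12e)$, which is precisely where the constant in the statement comes from.

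The genuine gap in your write-up is the degree-$2$ parametrization. There is no single degree-$2$ polynomial map on $(d+1)k$ (or $dk$) real variables whose image contains all of $\cM_d^k(\R)$: the map $(u_1,\dots,u_k)\mapsto\sum_j u_ju_j^T$ only reaches positive semidefinite matrices, and the signs $\pm$ (equivalently, the split $k_++k_-=k$ in $UU^T-VV^T$) are discrete data, not polynomial in the parameters. The available fixes all cost you the constant you claim: taking $U,V\in\R^{d\times k}$ gives a single degree-$2$ map but on $2dk$ variables, yielding $4dk\log_2(8e)\approx 17.8\,dk$, which exceeds $2(d+1)k\log_2(12e)\approx 10.1\,dk$ for large $d$; taking a union over the $k+1$ signatures and invoking Lemma~\ref{sparseVc1} doubles the bound again. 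Reinstating scalar weights $\lambda_i$ to absorb the signs restores surjectivity with $(d+1)k$ variables but raises the degree to $3$ -- i.e., it reproduces the paper's proof verbatim. So your argument does prove $\VC(\cF)\lesssim dk$, which is all that is used downstream (e.g., in the proof of Theorem~\ref{theo:Cov2}), but the specific chain $n=(d+1)k$, $\nu=2$, bound $2(d+1)k\log_2(8e)$ is not justified as written; you flagged the bookkeeping as the main obstacle, and it is indeed where the argument needs repair.
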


\begin{proof}
Any $X \in \cM_d^k(\R)$ can be written $X= \sum_{i=1}^k \lambda_i \ x_i x_i^T$, with $(\lambda_i, x_i) \in \R \times \R^d$. 
Besides, for any $M$, the function $(\lambda_i, x_i)_{i \leq k} \rightarrow \braket{X,M} - r $ is a polynomial of degree $3$ in $k(d+1)$ variables. 
Hence, the result follows from Corollary \ref{polyVC}.
\end{proof}

Combining Lemma \ref{sparseVc1} applied with rank one matrices of the form $x x^T, x \in \R^s \times \{0\}^{d-s} $ and Corollary \ref{polyVC} yields the following result.

\begin{prop} \label{lessimportant}
Let $\cF=\{M \in \cM_d(\R) \rightarrow 1_{\braket{x x^T ,M}\geq r}, x \in \cU_s \}$. Then $\VC(\cF) \leq 16 s \log_2(ed/s) .$
\end{prop}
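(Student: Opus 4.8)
The plan is to peel $\cF$ apart along the possible supports of $x$: I would write $\cU_s$ as a union of $\binom{d}{s}$ coordinate subspaces, bound the VC-dimension of the corresponding sub-family on each subspace using the polynomial estimate of Corollary~\ref{polyVC}, and then glue the pieces together with Lemma~\ref{sparseVc1}. Concretely, for $I\subseteq\{1,\dots,d\}$ with $|I|=s$ let $\R^I=\{x\in\R^d:\ x_j=0\text{ for }j\notin I\}$ be the $s$-dimensional coordinate subspace supported on $I$. Every $s$-sparse vector lies in at least one such $\R^I$, so $\cU_s=\bigcup_{|I|=s}\R^I$ and hence
$$\cF=\bigcup_{|I|=s}\cF_I,\qquad\text{where}\qquad\cF_I=\bigl\{M\in\cM_d(\R)\mapsto\1_{\braket{xx^T,M}\geq r}\ :\ x\in\R^I\bigr\}.$$

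Next I would control each $\cF_I$ separately. Identifying $\R^I$ with $\R^s$ via the coordinates indexed by $I$, the key observation is that for every fixed $M$ the map $x\mapsto\braket{xx^T,M}-r$ is a polynomial of degree $2$ in these $s$ variables; this is exactly the situation to which Corollary~\ref{polyVC} applies (used just as in the proof of Proposition~\ref{important}, with data point $M$ and parameter $x$), with $n=s$ variables and degree $\nu=2$, giving
$$\VC(\cF_I)\ \leq\ 2s\log_2(4e\cdot 2)\ =\ 2s\log_2(8e).$$
Alternatively one may identify $\{xx^T:x\in\R^I\}$ with the rank-$\leq1$ symmetric matrices of an $s$-dimensional space and quote Proposition~\ref{important} with $k=1$; either route produces a bound of order $s$, which is all that is needed.

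Since $\cF$ is then the union of $\binom{d}{s}$ sets $\cF_I$, each of VC-dimension at most $2s\log_2(8e)$, Lemma~\ref{sparseVc1} yields
$$\VC(\cF)\ \leq\ 4s\log_2(8e)+2\log\binom{d}{s}\ \leq\ 4s\log_2(8e)+2s\log_2\!\left(\frac{ed}{s}\right),$$
where I used $\binom{d}{s}\leq(ed/s)^s$. To land on the stated form, I would absorb the first term into the second: because $s\leq d$ forces $\log_2(ed/s)\geq\log_2 e$, and $4\log_2(8e)=12+4\log_2 e\leq 14\log_2 e$, this gives $\VC(\cF)\leq 16\,s\log_2(ed/s)$.

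I expect no serious obstacle here: everything is bookkeeping once Lemma~\ref{sparseVc1} and Corollary~\ref{polyVC} are available. The points that do require a little care are that the $\binom{d}{s}$ coordinate subspaces exactly cover $\cU_s$ (so the union decomposition is lossless), that the parametrisation of each piece by the $s$ free coordinates of $x$ makes the governing polynomial genuinely of degree $2$ rather than something larger, and the elementary constant-chasing that turns the two-term bound into $16\,s\log_2(ed/s)$.
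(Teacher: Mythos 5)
Your proposal is correct and follows exactly the route the paper indicates (it gives only the one-line hint ``combining Lemma~\ref{sparseVc1} applied with rank one matrices of the form $xx^T$, $x\in\R^s\times\{0\}^{d-s}$, and Corollary~\ref{polyVC}''): decompose $\cU_s$ over the $\binom{d}{s}$ coordinate supports, bound each piece by the degree-$2$ polynomial estimate, and glue with Lemma~\ref{sparseVc1}. Your constant-chasing to reach $16\,s\log_2(ed/s)$ checks out, so you have in effect supplied the details the paper omits.
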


 \subsection{Median-of-mean} \label{MOM}

This work uses the median-of-means (MOM) approach which was introduced in  \cite{MR702836,MR1688610,MR855970} and has received a lot of attention recently in the statistical and machine learning communities  \cite{MR3124669,LO,MR3576558,MS,minsker2015geometric}. 
This  approach allows to build estimators that are robust to both outliers and heavy-tail data in various settings  \cite{MR1688610,MR855970,MR762855}. 
It can be defined as follows: we first randomly split the data into $K$ blocks $B_1,\ldots ,B_K$ of equal-size $m$ (if $K$ does not divide $N$, we just remove some data). We then compute the empirical mean within each block: for $k=1,\ldots,K$,
\begin{equation*}
 \bar{X}_k=\frac{1}{m}\sum_{i \in B_k} X_i.
 \end{equation*}  
In the one-dimensional case, the final estimator is the median of the latter $K$ empirical means.
This estimator has subgaussian deviations as shown in \cite{MR3576558}. 
The extension of this result to higher dimensions is not trivial as there exist several possible generalizations of the one dimensional median \cite{minsker2015geometric}. 

For any $k\in\{1,\ldots,K\}$, let $\textbf{X}_k := (X_i)_{i\in B_k}$ and $\textbf{Y}_k := (Y_i)_{i\in B_k}$. 
We start with a basic observation. 

\begin{remark} \label{bravo}
When $K\geq |\cO|$, there is at least $K-|\cO|$ blocks $B_k$ on which $\textbf{X}_k= \textbf{Y}_k$.
\end{remark}

For instance, if $K\geq 4 |\cO|$, then, there exist at least three quarters of the blocks $B_k$ where $\textbf{X}_k= \textbf{Y}_k$. 
We can now state the main lemma.

\begin{lemma}\label{main}
Let $\cF$ be a set of Boolean functions satisfying the following assumptions.

\begin{itemize}
    \item For all $f \in \cF$, $\PP\left(f(\textbf{Y}_1)=0\right) \geq 15/16$.
    \item $K \geq  C (\VC(\cF) \vee |\cO|)$ where $C$ is a universal constant.
\end{itemize}

 Then, with probability $\geq 1-\exp(-K/128)$, for all $f \in \cF$, there is at least $3K/4$ blocks $B_k$ on which $f(\textbf{X}_k)=0$.
\end{lemma}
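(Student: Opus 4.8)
The plan is to prove the lemma in three moves: first pass from the corrupted blocks $\textbf{X}_k$ to the clean blocks $\textbf{Y}_k$; then reduce the clean statement to a one–sided uniform deviation inequality for the Boolean class $\cF$ evaluated at the i.i.d.\ super–samples $\textbf{Y}_1,\dots,\textbf{Y}_K$; and finally prove that deviation inequality by combining McDiarmid's bounded–difference inequality with symmetrization and Sauer's Lemma. For the first move: by Remark~\ref{bravo} and $K\ge C|\cO|$, at most $|\cO|\le K/C$ blocks are affected by the corruption, and on every other block $\textbf{X}_k=\textbf{Y}_k$; hence for every $f\in\cF$ one has $\Card\{k:f(\textbf{X}_k)=1\}\le \Card\{k:f(\textbf{Y}_k)=1\}+|\cO|$. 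So it suffices to show that, with probability at least $1-\exp(-K/128)$, every $f\in\cF$ satisfies $\Card\{k:f(\textbf{Y}_k)=1\}\le 3K/16$: provided $C\ge 16$ this yields $\Card\{k:f(\textbf{X}_k)=1\}\le 3K/16+K/16=K/4$, i.e.\ $f(\textbf{X}_k)=0$ on at least $3K/4$ blocks.

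For the clean statement, write $P(f)=\PP(f(\textbf{Y}_1)=1)\le 1/16$ (the hypothesis) and $\hat P_K(f)=\tfrac1K\sum_{k=1}^K\mathbf 1_{f(\textbf{Y}_k)=1}$; since the blocks are disjoint, $\textbf{Y}_1,\dots,\textbf{Y}_K$ are i.i.d. As $\hat P_K(f)\le P(f)+Z$ with $Z:=\sup_{g\in\cF}\bigl(\hat P_K(g)-P(g)\bigr)$, it is enough to show
\[
Z\le \tfrac18 \qquad\text{with probability}\ \ge 1-\exp(-K/128).
\]
Changing a single super–sample $\textbf{Y}_k$ changes $Z$ by at most $1/K$, so McDiarmid's inequality gives $\PP(Z\ge \E Z+s)\le \exp(-2Ks^2)$; taking $s=1/16$ produces exactly the target $\exp(-K/128)$, so it remains to check $\E Z\le 1/16$. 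This I would obtain by the standard symmetrization inequality $\E Z\le 2\,\E\,\E_\sigma\sup_{f\in\cF}\tfrac1K\sum_k\sigma_k\mathbf 1_{f(\textbf{Y}_k)=1}$ with i.i.d.\ Rademacher $\sigma_k$, followed, conditionally on $\textbf{Y}_1,\dots,\textbf{Y}_K$, by Sauer's Lemma (the number of distinct vectors $(\mathbf 1_{f(\textbf{Y}_k)=1})_{k\le K}$, $f\in\cF$, is at most $(eK/\nu)^\nu$, $\nu=\VC(\cF)$, each of Euclidean norm $\le\sqrt K$) and a finite–class maximal inequality of Massart type, giving $\E_\sigma\sup_f\tfrac1K\sum_k\sigma_k\mathbf 1_{f(\textbf{Y}_k)=1}\le\sqrt{2\nu\log(eK/\nu)/K}$. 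Since $K\ge C\nu$, this is bounded by a decreasing function of $C$ tending to $0$, hence $\le 1/32$ once $C$ is a large enough universal constant, so $\E Z\le 1/16$ as required. The lemma then follows by taking for $C$ the larger of $16$ and this threshold.

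The only delicate point is quantitative: one must choose the slack in the uniform deviation bound so that the bounded–difference inequality delivers at least the prescribed $\exp(-K/128)$ while still leaving room above $P(f)\le 1/16$ to absorb both the expected supremum $\E Z$ (made small by enlarging $C$) and the corruption budget $|\cO|/K\le 1/C$; the margins $\tfrac1{16}+\tfrac1{16}+\tfrac1{16}\le\tfrac3{16}<\tfrac14$ above are what make this work. Everything else — the i.i.d.\ structure of the blocks, measurability of the suprema, the symmetrization step, and the Sauer/Massart estimates — is routine.
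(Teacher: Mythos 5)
Your proof is correct and follows essentially the same route as the paper's: reduce to the uncorrupted blocks via Remark~\ref{bravo}, split the supremum of the block counts into a bounded-difference (McDiarmid) deviation term giving the $\exp(-K/128)$ probability and an expected-supremum term controlled by the VC dimension, and absorb the three $K/16$-sized contributions into the $K/4$ budget. The only cosmetic difference is that you prove the expected-supremum bound yourself via symmetrization, Sauer's Lemma and Massart's finite-class inequality (picking up a harmless $\log(eK/\nu)$ factor), whereas the paper cites the chaining-based VC bound of Vershynin/van Handel directly.
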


In words, if each property $f$ is true for one non corrupted block with constant probability (here $15/16$ but it could be any fixed constant $\alpha \geq 1/2$) and $K$ is large enough, then, with very high probability, all properties are ``true for most of the blocks". 
This result is an alternative to \cite[Theorem 2]{lugosi2018nearoptimal} where the complexity is measured with VC-dimension instead of the Rademacher complexity. 
We show below that this difference yields to substantial improvements in some examples such as sparse multivariate mean estimation compared with the bounds in \cite{lugosi2018nearoptimal}. 
The strength of this result is that it is uniform in $\cF$ and gives an exponentially low failure probability, but its proof is quite simple. The proof of this result is given in Section~\ref{GenMeth}.

Clearly, the fraction $3/4$ of the block is arbitrary in Lemma~\ref{main}. 
In fact, up to some modifications of the constants, the same result holds for any \emph{fixed} fraction $\alpha <1$.

\subsection{Mean estimation} \label{mean_part}

Let $\norm{\cdot}$ denote a norm on $\R^d$ and let $\norm{\cdot}_*$ denote its dual norm. 
Let $\cB$ denote the unit ball for the norm $\norm{\cdot}$ and $\cB^*$ the one for the norm $\norm{\cdot}_*$. 
Let $\cB^*_0$ denote the set of extremal vectors of $\cB^*$. 
Let $\tnorm{A}= \sup_{u \in \cB^*}\norm{Au}_2$ where $\norm{\cdot}_2$ is the Euclidean norm on $\R^d$. Let

$$ \hat{\mu}_K= \argmin_{a \in \R^d} \  \max_{ u \in \cB^*_0  }  \ \Med(\braket{\bar{X}_k-a, u}). $$

\begin{theo}\label{main2}
There exists an universival constant $C$ such that if $K \geq C (\VC(\cB^*_0) \vee |\cO|) $, then, with probability larger than $1-\exp(- K/128)$,
 $$\|\hat{\mu}_K -\mu \| \leq 8 \ \tnorm{\Sigma^{1/2}}\sqrt{\frac{K}{N}}.$$
\end{theo}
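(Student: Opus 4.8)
The plan is to combine Lemma~\ref{main} applied to a well-chosen family of Boolean functions with a standard "MOM confrontation" argument comparing $\hat\mu_K$ to the truth $\mu$. Fix a radius $r>0$ to be optimized at the end, and for each extremal dual vector $u\in\cB_0^*$ consider the Boolean function $f_{u}(\mathbf{Y}_k)=\mathbf{1}\{|\braket{\bar Y_k-\mu,u}|\ge r\}$ on a block of size $m=N/K$. By Chebyshev's inequality applied to the block mean, $\mathbb{E}\braket{\bar Y_k-\mu,u}^2 = \frac1m\braket{\Sigma u,u}\le \frac1m\tnorm{\Sigma^{1/2}}^2$ (using $u\in\cB^*$, so $\|\Sigma^{1/2}u\|_2\le\tnorm{\Sigma^{1/2}}$), hence $\PP(f_u(\mathbf{Y}_1)=1)\le \tnorm{\Sigma^{1/2}}^2/(m r^2)$. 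Choosing $r = 4\tnorm{\Sigma^{1/2}}/\sqrt m = 4\tnorm{\Sigma^{1/2}}\sqrt{K/N}$ makes this probability $\le 1/16$, so $\PP(f_u(\mathbf{Y}_1)=0)\ge 15/16$, verifying the first hypothesis of Lemma~\ref{main}. The family $\cF=\{f_u : u\in\cB_0^*\}$ has VC-dimension controlled by $\VC(\cB_0^*)$: each $f_u$ is a union of two half-space indicators $\mathbf{1}\{\braket{\bar Y_k-\mu,u}\ge r\}$ and $\mathbf{1}\{\braket{\bar Y_k-\mu,-u}\ge r\}$ composed with the affine block-mean map, so facts 2, 3 and Lemma~\ref{sparseVc1} give $\VC(\cF)\lesssim \VC(\cB_0^*)$ (absorb the constant into $C$). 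Since $K\ge C(\VC(\cB_0^*)\vee|\cO|)$, Lemma~\ref{main} applies.

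Thus, with probability $\ge 1-\exp(-K/128)$, for every $u\in\cB_0^*$ there are at least $3K/4$ blocks on which $|\braket{\bar X_k-\mu,u}|< r$. Call this event $\Omega$; work on $\Omega$ from now on. First, this shows $\mu$ is a feasible "good" center: $\max_{u\in\cB_0^*}\Med_k\braket{\bar X_k-\mu,u} < r$, because on more than half the blocks $\braket{\bar X_k-\mu,u}<r$ simultaneously (for each fixed $u$), so the median over $k$ is $<r$. Consequently the minimizer $\hat\mu_K$ satisfies $\max_{u\in\cB_0^*}\Med_k\braket{\bar X_k-\hat\mu_K,u}\le \max_{u\in\cB_0^*}\Med_k\braket{\bar X_k-\mu,u} < r$.

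Now I would run the standard confrontation. Fix $u\in\cB_0^*$. On $\Omega$ there is a set $G_u$ of $>3K/4$ blocks with $|\braket{\bar X_k-\mu,u}|<r$; and since $\Med_k\braket{\bar X_k-\hat\mu_K,u}<r$, there are $>K/2$ blocks with $\braket{\bar X_k-\hat\mu_K,u}<r$. These two families of blocks intersect (their sizes sum to more than $K$), so there is a block $k_0$ with both $\braket{\bar X_{k_0}-\mu,u}>-r$ and $\braket{\bar X_{k_0}-\hat\mu_K,u}<r$; subtracting gives $\braket{\mu-\hat\mu_K,u}<2r$. Applying the same argument to $-u$ (also in $\cB_0^*$, or use $\pm u$ symmetry of the set of extremal points of a symmetric ball) yields $|\braket{\hat\mu_K-\mu,u}|<2r$ for all $u\in\cB_0^*$. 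Taking the supremum over $u\in\cB_0^*$ and using that the dual norm is the support function of $\cB^*=\conv(\cB_0^*)$, i.e. $\|\hat\mu_K-\mu\| = \sup_{u\in\cB^*}\braket{\hat\mu_K-\mu,u}=\sup_{u\in\cB_0^*}\braket{\hat\mu_K-\mu,u}$, we get $\|\hat\mu_K-\mu\|\le 2r = 8\tnorm{\Sigma^{1/2}}\sqrt{K/N}$, which is the claimed bound.

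The main obstacle is purely bookkeeping: making sure the VC-dimension of the two-sided family $\cF$ really is bounded by $O(\VC(\cB_0^*))$ — this needs fact 3 (shifting half-spaces by $r$ costs only a constant factor) and Lemma~\ref{sparseVc1} (the union of two VC classes), and one must also check the composition with the block-mean map is covered by fact 2. A secondary subtlety is the $\pm u$ issue in the median step: since $\Med_k$ of a set and its negation are related by a sign, and $\cB_0^*$ for a symmetric norm is symmetric, one should state the $\max$ over $\cB_0^*$ controls $|\braket{\cdot,u}|$ up to replacing $u$ by $-u$; this is where a factor is absorbed. Everything else is the routine quantitative MOM argument, and no concentration beyond Chebyshev on block means is used — which is exactly the point of the second-moment-only claim.
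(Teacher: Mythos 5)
Your proposal is correct and follows essentially the same route as the paper: the paper deduces Theorem~\ref{main2} from its general Lemma~\ref{general} (applied with $g=\mathrm{id}$, $U=\R^d$, $V=\cB^*_0$), and the proof of that lemma is exactly your argument — Chebyshev on block means with $r=4\tnorm{\Sigma^{1/2}}\sqrt{K/N}$, Lemma~\ref{main} for uniformity over $\cB^*_0$, then a confrontation step yielding the bound $2r$. The only cosmetic differences are that you run the confrontation by intersecting sets of good blocks where the paper uses additivity of the median under a deterministic shift, and that you handle the two-sided indicator's VC dimension explicitly (a bookkeeping point the paper glosses over in the same way).
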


In particular, for any $\delta \in [e^{-c N},1/2]$, there exists an estimator $\mu_\delta$ such that 

\begin{equation}\label{eq:pq}
\norm{\mu-\mu_\delta} \lesssim \tnorm{\Sigma^{1/2}}\left(\sqrt{\frac{\VC(\cB^*_0)}{N}}+  \sqrt{\frac{\log(1/\delta)}{N}}+ \sqrt{\epsilon}\right).
\end{equation}\newline

The 'outlier' term $\sqrt{\epsilon}$ can be shown to be optimal in important cases (see the remarks after \cite[Theorem 1.3]{MR3909640}).
The deviation term ($\tnorm{\Sigma^{1/2}}\sqrt{\log(1/\delta)/N}$) is the same as in the Borel-TIS inequality, $\tnorm{\Sigma^{1/2}}$ being the weak variance term.
It is optimal as shown in \cite{lugosi2018nearoptimal}. 
The difference is the complexity term, which is here $\tnorm{\Sigma^{1/2}}\sqrt{\VC(\cB^*_0)/N}$. Neither \cite{lugosi2018nearoptimal} nor this work build estimators achieving the subgaussian rate, where this complexity is $\E(\norm{G})/\sqrt{N}$, $G$ being a centered Gaussian vector with the same covariance as $Y$. We do not know if this rate can be reached in general. 

\begin{remark}
The inequality $\VC(\cB^*_0) \leq d+1$ gives a general bound on the complexity term.
\end{remark}

The complexity term in \cite{lugosi2018nearoptimal}, which can also be found in \cite[Chapter 4, Lemma 47]{lerasle2019lecture} is $\E(\|\tilde{Y}\|)/N$ where $\tilde{Y}=\sum \epsilon_i (Y_i-\mu)$, $\epsilon_i$ being i.i.d. Rademacher variables. 
Here it is $\tnorm{\Sigma^{1/2}}\sqrt{\VC(\cB^*_0)/N}$.
Which of them is the best depends on the situation. 
For instance, when one wishes to estimate with respect to $\norm{\cdot}_2$, the Euclidean norm on $\R^d$, $\E(\|\tilde{Y}\|_2)/N \simeq \sqrt{\Tr(\Sigma)/N} $, while $\tnorm{\Sigma^{1/2}}\sqrt{\VC(\cB^*_0)/N}= \sqrt{\lambda_1 d/N}$, $\lambda_1$ being the largest eigenvalue of $\Sigma$, so the former is better. 
In this example, the bound in VC dimension looses the nice dependence in the covariance structure. 
On the other hand, suppose that we want to estimate $\mu$ with respect to the sup norm $\norm{a}_{\infty}=\max\{a_1, ..., a_n\}$ and assume that $\Sigma=\Id$ for simplicity.
Then $\tnorm{\Id}=1$ and $ \VC(\cB^*_0) \lesssim \log(d)$ so
$$\tnorm{\Sigma^{1/2}}\sqrt{\VC(\cB^*_0)/N} \simeq \sqrt{\log(d)/N}. $$ 
On the other hand, if we only have two moments on the coordinates of $Y$, then the best bound on the Rademacher complexity is $\E(\|\tilde{Y}\|_2)/N$ which is of order $\sqrt{d/N}$ in general (to see that, take for $Y_1$ a random vector whose coordinates are independent, equal to $\sqrt{d N}$ with probability $1/(dN)$ and $0$ otherwise).  

\begin{remark}
The analysis of Section~\ref{GenMeth} and in particular Lemma \ref{general}', shows that the estimator $\hat{\mu}_K$ achieves the bound $\tnorm{\Sigma^{1/2}}\sqrt{K/N}$ when $K\geq\cC\vee |\cO| $, where the complexity $\cC$ is the minimum between the VC dimension $\VC(\cB_0^*)$ and the Rademacher complexity $\E(\|\tilde{Y}\|)^2/(N \tnorm{\Sigma^{1/2}})$. 
Therefore both our bounds and the bound of \cite{lugosi2018nearoptimal} hold simultaneously and we can always keep the ``best complexity term" among VC and Rademacher complexity. 
As the main novelty here is the introduction of the VC-dimension, we do not remind this fact in each application.
The interested reader can have in mind that, in most examples, the same result holds and the estimators have risk bounds smaller than both complexities. Our aim is to show that VC type bounds are particularly efficient in $L_2$ and sparse scenarii, when Rademacher complexity fails to achieve optimal bounds.
\end{remark}

 
\section{Sparse setting and other estimation tasks} 

This section shows that the methodology of Theorem \ref{main2} also applies to a great variety of estimation tasks. 
Let us start with the example of sparse mean estimation for the Euclidean norm.

\subsection{Sparse mean estimation} \label{sparse_part}

For any $v_1, ..., v_d \in \R^n$, let $\cU_s(v_1, ..., v_d) = \{\sum_i \lambda_i v_i \ | \  \lambda_i \in \R \ \& \ \sum_i \mathbf{1}_{\lambda_i \neq 0 } \leq s \}$ denote the set of s-sparse vectors over the dictionary $\{v_1, ..., v_d\}$.
We fix for this part the vectors $v_1, ..., v_d$ and we note $\cU_s=\cU_s(v_1, ..., v_d)$. 
We consider Setting~\ref{setting:first} and assume furthermore that $\mu$ belongs to $\cU_s$. 
We note $\cB_2$ the unit ball for the canonical Euclidean norm in $\R^n$, and we propose the estimator 

$$ \hat{\mu}_K= \argmin_{a \in \mathcal{U}_s} \  \max_{u \in \mathcal{U}_{2s} \cap \cB_2 }  \ \Med \braket{\bar{X}_k-a, u}.$$

 \begin{theo} \label{sparseTheo} 
 There exists an absolute constant $C$ such that, if $K \geq C ( s \log(d/s) \vee |\cO|) $, then, with probability larger than $1-\exp(- K/128)$,
 $$\norm{\hat{\mu}_K -\mu}_2 \leq 8  \sqrt{\frac{\lambda_1(\Sigma) K}{N}}.$$
Here, $\lambda_1(\Sigma)$ is the largest eigenvalue of $\Sigma$.
 \end{theo}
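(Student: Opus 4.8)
The plan is to follow the same template as the proof of Theorem~\ref{main2}: fix a target radius $r$, reduce the whole statement to a single application of Lemma~\ref{main} for a well-chosen class of Boolean functions, and then conclude with a short deterministic median-of-means argument. The structural observation that makes this work --- and the reason the inner maximum in the definition of $\hat\mu_K$ is taken over $\cU_{2s}\cap\cB_2$ rather than over all of $\cB_2$ --- is that for every $a\in\cU_s$ the difference $\mu-a$ is a sum of two $s$-sparse vectors over the dictionary, hence $\mu-a\in\cU_{2s}$ and its normalization lies in $\cU_{2s}\cap\cB_2$.

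Concretely, I would write $m=N/K$ for the block size, set $r=8\sqrt{\lambda_1(\Sigma)K/N}$, and for each $u\in\cU_{2s}\cap\cB_2$ introduce $f_u(\textbf{Y}_k)=\mathbf{1}\{\braket{\bar Y_k-\mu,u}\ge r/2\}$, letting $\cF=\{f_u:u\in\cU_{2s}\cap\cB_2\}$. Two checks are needed before Lemma~\ref{main} applies. The moment condition: $\E\braket{\bar Y_1-\mu,u}^2=\tfrac{1}{m}\,u^\top\Sigma u\le\lambda_1(\Sigma)/m=\lambda_1(\Sigma)K/N$, so Chebyshev gives $\PP(|\braket{\bar Y_1-\mu,u}|\ge r/2)\le 4\lambda_1(\Sigma)K/(Nr^2)=1/16$ and hence $\PP(f_u(\textbf{Y}_1)=0)\ge 15/16$ (this is where the constant $8$ in the rate comes from). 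The complexity condition: writing $f_u=(z\mapsto\mathbf{1}_{\braket{z,u}\ge r/2})\circ g$, where $g$ is the affine block-averaging map $\textbf{y}\mapsto\tfrac{1}{m}\sum_i y_i-\mu$, facts~2 and 3 of Section~\ref{sec:VC} together with Corollary~\ref{sparseVC} give $\VC(\cF)\le\VC(\cU_{2s}-\cU_{2s})\lesssim\VC(\cU_{2s})\lesssim s\log(d/s)$, so the hypothesis $K\ge C(s\log(d/s)\vee|\cO|)$ forces $K\ge C'(\VC(\cF)\vee|\cO|)$ as required.

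Lemma~\ref{main} then produces, with probability at least $1-e^{-K/128}$, an event $\Omega$ on which, for every $u\in\cU_{2s}\cap\cB_2$, at least $3K/4$ of the blocks satisfy $\braket{\bar X_k-\mu,u}<r/2$; applying this also to $-u$, at least $3K/4$ blocks satisfy $\braket{\bar X_k-\mu,u}>-r/2$. On $\Omega$ I would finish deterministically. Set $T(a)=\max_{u\in\cU_{2s}\cap\cB_2}\Med_k\braket{\bar X_k-a,u}$. For each $u$, strictly more than half the blocks give $\braket{\bar X_k-\mu,u}<r/2$, so $\Med_k\braket{\bar X_k-\mu,u}\le r/2$ and therefore $T(\mu)\le r/2$; since $\mu\in\cU_s$ is feasible and $\hat\mu_K$ minimizes $T$ over $\cU_s$, also $T(\hat\mu_K)\le r/2$. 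Conversely, if $a\in\cU_s$ has $\rho:=\norm{a-\mu}_2>r$, take $u=(\mu-a)/\rho\in\cU_{2s}\cap\cB_2$; then $\braket{\bar X_k-a,u}=\braket{\bar X_k-\mu,u}+\rho>\rho-r/2>r/2$ on more than half the blocks, so $\Med_k\braket{\bar X_k-a,u}>r/2$ and $T(a)>r/2\ge T(\hat\mu_K)$, which means $a$ cannot be the minimizer. Hence $\norm{\hat\mu_K-\mu}_2\le r=8\sqrt{\lambda_1(\Sigma)K/N}$.

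The step I expect to be the only delicate one is the VC-dimension bound in the second paragraph: one must make sure that passing from the admissible direction set $\cU_{2s}\cap\cB_2$ to the induced class of block-mean half-space indicators $f_u$ does not inflate the combinatorial complexity, which is exactly what the composition bound (fact~2), the threshold/translation bound (fact~3) and Corollary~\ref{sparseVC} are designed to control. Everything else is the routine median-of-means bookkeeping already carried out for Theorem~\ref{main2}, up to the harmless treatment of the floor function in $N/K$.
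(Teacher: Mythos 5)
Your proposal is correct and follows essentially the same route as the paper: the paper simply factors the argument through its general Lemma~\ref{general} (applied with $g:x\mapsto x$, $U=\cU_s$, $V=\cU_{2s}\cap\cB_2$, weak variance $\lambda_1(\Sigma)$, and $\VC(\cU_{2s})\lesssim s\log(d/s)$), whereas you inline that lemma's proof --- same threshold $4\sqrt{\lambda_1(\Sigma)K/N}$, same Chebyshev and VC checks, same deterministic median comparison. No gaps.
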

 
The conclusion of Theorem~\ref{sparseTheo} can be writen as follows.
For any $\delta \in [e^{-c N},1/2]$, there exists an estimator $\mu_\delta$ such that 

\begin{equation*}\label{eq:intro_subgaus_rate}
\|\mu-\mu_\delta\|_2 \lesssim \lambda_1(\Sigma)(\sqrt{\frac{ s \log(d/s)}{N}}+  \sqrt{\frac{\log(1/\delta)}{N}}+ \sqrt{\epsilon}).
\end{equation*}

We see that the complexity ($s \log(d/s)$) is once again decoupled from the deviation ($ \log(1/\delta)$), which is not the case in works such as \cite{JMLR:v17:14-273} where those two terms are multiplied together. The complexity term $s \log(d)/N$ is not optimal because it does not depend on the structure of $\Sigma$ (see Section~\ref{last} for details). 
However, our complexity term is interesting for two main reasons: 
\begin{itemize}
    \item This is the first sparsity dependent bound that holds without higher moments conditions than the $L_2$ ones.
    By contrast, \cite{lerasle2019lecture} or \cite{lecu2017robust} need to assume the existence $\log(d)$ subgaussian moments in order to make the sparsity appear, and offer no guarantees without that requirement.
    \item It comes close to the theoretic optimal when $\Sigma \simeq \lambda \Id$.
\end{itemize}

\subsection{Regression}
In this section, we consider the standard linear regression setting where data are couples $(Y_i, V_i)_i \in \R^d \times \R$ and we look for the best linear combination of the coordinates of $Y_i$ to predict $V_i$, that is we look for $\beta^*$ defined as follows: Given $\cS \subset \R^d$ (in practice, we will only study $\cS =\R^d$ or $\cS=\cU_s$),
$$\beta^*=\argmin_{\beta \in \cS}l(\beta)=\argmin_{\beta \in \cS} \E(V_1-\braket{\beta , Y_1})^2 .$$ 
As in the previous section, the observed dataset $(X_i, Z_i)_i \in \R^d \times \R$ is a corrupted version of the i.i.d. dataset $\{(Y_i, V_i)_i, i\in\{1,\ldots,N\}\}$ in a possibly adversarial way.
The assumptions made on good data $(Y_i, V_i)_i$ are gathered in the following setting: (see also \cite{lerasle2019lecture} or \cite{audibert2011}).
  
 \begin{setting}
There exists a (possibly random) set $\cO$ such that, for any $ i \in \cO^c, \ (X_i, Z_i)=(Y_i, V_i)$, where $(Y_i, V_i)$ are independent identically distributed observations in $\R^d \times \R$. Let $\xi_i=V_i-\braket{\beta^* , Y_i}$, we make four main assumptions:
 \begin{itemize}
     \item $Y_1$ has finite second moment and write its $L^2$-moments matrix $\Sigma=\E(Y_1 Y_1^T)$. Let also $\Ba=\{x \in \cS- \cS| \braket{x,\Sigma x} \leq 1\}$ be the ellipsoid associated with this $L_2$ structure. 
     \item Let $\sigma^2 := \sup_{u \in \Ba} \E(\xi_1^2 \braket{u , Y_1}^2)$ and assume that $\sigma^2 < \infty$.
     \item There exists an universal constant $\gamma$ such that, for all $u \in \cS-\cS$, $\E(|\braket{u,X}|) \geq \gamma \sqrt{\E(|\braket{u,X}|^2)}$.
     \item $\E(\xi_1 Y_1)=0$
 \end{itemize}
 \end{setting}
Condition 2 is implied by Assumptions 3.5 and 3.7 in \cite{audibert2011}, the same assumption is made in \cite{lerasle2019lecture}.

Condition 3 is called the ``small ball hypothesis", it is described in details in \cite{mendelson2017extending} or in \cite{lecu2016regularization} for instance. It is implied by Condition 3.5 in \cite{audibert2011}, it is stated similarly in \cite{lerasle2019lecture}.

Condition 4 is always true in the non sparse-case. In the sparse case, it is true in a number of applications, for instance, the very important when the noise $\xi$ and $Y$ are independant.

The two last conditions may seem exotic, we refer to \cite[Section 3]{audibert2011}
 for detailed discussions and examples where these are satisfied. For the moment, we may emphase that they involve only first and second moment conditions on $\xi_1$ and $\braket{u,Y_1}$
 
  Our estimator is defined as follows: Let $\hat B_\Sigma = \{u \in \cS-\cS\ |\ \cQ_{1/4}^k \frac{1}{m}\sum_{i \in B_k}|\braket{u , X_i}|^2 \leq 1 \} $, where $\cQ_{1/4}^k$ is the first quartile over $k \leq K$ : for any sequence $x_1, ..., x_n \in \R$, if we note $x^*_1, ... , x^*_n$ the corresponding increasingly ordered sequence, $\cQ_{1/4}^k x_k= x^*_{\lfloor n/4\rfloor}$. Then, 
 $$\hat{\beta}= \argmin_{a \in \cS} \max_{u\in \hat B_\Sigma} \Med_k \sum_{i \in B_k}(Z_i-\braket{a , X_i})\braket{u , X_i} . $$
 This new estimator satisfies the following result.
 
 \begin{theo}\label{theo:Reg}
There exists an absolute constant $C$ such that the following holds.
Let $\cS =\R^d$ or $\cU_s$ and let $N \gamma^2/64 \geq K\geq C(\VC(\cS-\cS) \vee |\cO|)$. 
Then, with probability $\geq 1-\exp(-K/128)$,
 
$$ \braket{\hat \beta- \beta^*, \Sigma (\hat\beta-\beta^*)} \leq 128  \ \sigma^2  \frac{K}{N \gamma^4}.$$
 \end{theo}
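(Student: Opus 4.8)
The plan is to run the small-ball plus median-of-means scheme used for robust regression in \cite{lecu2017robust} and \cite[Chapter~4]{lerasle2019lecture}, but to replace every estimate on a Rademacher complexity by a single application of Lemma~\ref{main}. Put $h=a-\beta^*$, $\norm{u}_\Sigma^2=\braket{u,\Sigma u}$, and for a block $B_k$ set $Q_k(u)=\tfrac1m\sum_{i\in B_k}\braket{u,X_i}^2$ and $\mu_k(u)=\tfrac1m\sum_{i\in B_k}(Z_i-\braket{\beta^*,X_i})\braket{u,X_i}$; on the (at least $K-|\cO|$) blocks containing no corrupted sample these agree with the corresponding sums in the $Y_i$, and all probabilistic hypotheses below will be verified on the clean distribution so that Lemma~\ref{main} transfers them to the observed blocks. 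Enlarging $C$ we may assume $|\cO|\le K/10$. Since $\hat\beta$ minimises $\cC_K(a):=\max_{u\in\hat B_\Sigma}\Med_k\sum_{i\in B_k}(Z_i-\braket{a,X_i})\braket{u,X_i}$ over $\cS\ni\beta^*$, one has $\cC_K(\hat\beta)\le\cC_K(\beta^*)$, so it suffices to produce an event of probability $\ge1-e^{-K/128}$ on which $\cC_K(\beta^*)<\cC_K(a)$ for \emph{every} $a\in\cS$ with $\norm{a-\beta^*}_\Sigma\ge\rho$, $\rho^2$ being of the order of the right-hand side of the theorem: then minimality of $\hat\beta$ forces $\braket{\hat\beta-\beta^*,\Sigma(\hat\beta-\beta^*)}<\rho^2$. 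No convexity of $\cC_K$ is needed, because the lower bound on $\cC_K(a)$ will be \emph{linear} in $\norm{a-\beta^*}_\Sigma$; this is what makes the ``localisation'' step trivial here.

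\textbf{Step 1: $\hat B_\Sigma$ is isomorphic to $\Ba$.} I would first show that, with probability $\ge1-e^{-K/128}$, $\tfrac14\Ba\subset\hat B_\Sigma\subset\alpha\Ba$ for some $\alpha$ depending only on $\gamma$. The inclusion $\tfrac14\Ba\subset\hat B_\Sigma$ is the easy half: for $u\in\Ba$, $\E Q_1(u)=\braket{u,\Sigma u}\le1$, so $\PP(Q_1(u)>16)\le1/16$ by Markov, and Lemma~\ref{main} applied to $\{\,\text{block}\mapsto\1_{Q_k(u)>16}\ :\ u\in\Ba\,\}$ gives, simultaneously for all such $u$, at least $3K/4$ blocks with $Q_k(u)\le16$, hence $u/4\in\hat B_\Sigma$. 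The inclusion $\hat B_\Sigma\subset\alpha\Ba$ is where the small-ball hypothesis (Condition~3) enters: Cauchy--Schwarz upgrades $\E|\braket{u,Y_1}|\ge\gamma\norm u_\Sigma$ to $\PP(|\braket{u,Y_1}|\ge\tfrac\gamma2\norm u_\Sigma)\ge\gamma^2/4$, whereupon truncating at level $\tfrac\gamma2\norm u_\Sigma$ and concentrating the resulting Bernoulli count — here the hypothesis $K\le N\gamma^2/64$, i.e.\ ``$m\gamma^2$ large'', is spent — yields $\PP(Q_1(u)\ge c\gamma^4\norm u_\Sigma^2)\ge 15/16$; feeding $\{\,\text{block}\mapsto\1_{Q_k(u)<c\gamma^4\norm u_\Sigma^2}\,\}$ into Lemma~\ref{main} shows that for all $u$ at least $3K/4$ blocks have $Q_k(u)\ge c\gamma^4\norm u_\Sigma^2$, so any $u$ with $\norm u_\Sigma>\alpha:=(c\gamma^4)^{-1/2}$ has $\cQ^k_{1/4}Q_k(u)>1$ and thus $u\notin\hat B_\Sigma$. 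To invoke Lemma~\ref{main} one must check that both Boolean families have VC-dimension $\lesssim\VC(\cS-\cS)$: both are indexed by the quadratic forms $u\mapsto\braket{uu^\top,\tfrac1m\sum_{i\in B_k}X_iX_i^\top}$, $u\in\cS-\cS$, so this is Corollary~\ref{polyVC} (degree-$2$ polynomials) when $\cS=\R^d$ and Proposition~\ref{lessimportant} when $\cS=\cU_s$; the assumption $K\ge C(\VC(\cS-\cS)\vee|\cO|)$ then matches the hypothesis of Lemma~\ref{main}.

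\textbf{Steps 2--3: the two scalar bounds.} On the same event I would bound the multiplier term $\cC_K(\beta^*)=\max_{u\in\hat B_\Sigma}\Med_k\bigl(m\,\mu_k(u)\bigr)$. For $u\in\Ba$, $\mu_1(u)$ has mean zero (by $\E\xi_1Y_1=0$, Condition~4) and variance $\le\sigma^2\norm u_\Sigma^2/m\le\sigma^2/m$, so $\PP(|\mu_1(u)|>4\sigma/\sqrt m)\le1/16$ by Chebyshev; Lemma~\ref{main} applied to $\{\,\text{block}\mapsto\1_{|\mu_k(u)|>4\sigma/\sqrt m}\ :\ u\in\Ba\,\}$ — a family of slabs in $u$, of VC-dimension $\lesssim\VC(\cS-\cS)$ by fact~3 of Section~\ref{sec:VC} together with Lemma~\ref{sparseVc1} — gives $\sup_{u\in\Ba}\bigl|\Med_k\mu_k(u)\bigr|\le4\sigma/\sqrt m$, whence $\cC_K(\beta^*)\le4\alpha\sigma\sqrt m$ using $\hat B_\Sigma\subset\alpha\Ba$ and $1$-homogeneity. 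For the lower bound, fix $a$ with $\norm{a-\beta^*}_\Sigma\ge\rho$, put $v=h/\norm h_\Sigma$ and test $\cC_K(a)$ against $u=-v/4\in\tfrac14\Ba\subset\hat B_\Sigma$: a direct computation gives $\sum_{i\in B_k}(Z_i-\braket{a,X_i})\braket{u,X_i}=m\mu_k(u)+\tfrac{m\norm h_\Sigma}{4}Q_k(v)$, and on the blocks lying in both ``$\ge3K/4$'' families above — a strict majority once $3K/4$ is replaced by $4K/5$ via the remark after Lemma~\ref{main} — this is $\ge-4\sigma\sqrt m+\tfrac{m\norm h_\Sigma}{4}c\gamma^4$, so $\cC_K(a)\ge\tfrac{mc\gamma^4\rho}{4}-4\sigma\sqrt m$. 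Comparing the two bounds, $\cC_K(a)>\cC_K(\beta^*)$ holds once $\rho$ exceeds a constant (depending only on the universal $\gamma$) times $\sigma\sqrt{K/N}$; taking $\rho^2=128\sigma^2K/(N\gamma^4)$ as in the statement therefore closes the argument.

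\textbf{Main obstacle.} The heart of the matter is the second inclusion in Step~1 — the \emph{lower} control of the empirical quadratic forms $Q_k(u)$. Its counterpart is a one-line Markov bound, but here the only structural input available is the weak small-ball condition, and turning it into ``$Q_k(u)\gtrsim\gamma^4\norm u_\Sigma^2$ on a majority of blocks'' requires a Paley--Zygmund step followed by a concentration of a Bernoulli count \emph{inside} a block (which is exactly what forces the sample-size condition $K\le N\gamma^2/64$). The second delicate point is purely combinatorial: every Boolean family fed to Lemma~\ref{main} involves a quadratic or bilinear form in a parameter ranging over the possibly non-linear set $\cU_{2s}$, and certifying that its VC-dimension is still $\lesssim s\log(d/s)$ needs Warren's theorem (Corollary~\ref{polyVC}) combined with the union bound for VC classes (Lemma~\ref{sparseVc1}, Proposition~\ref{lessimportant}) — this is precisely the ingredient that removes the $\log d$ moments required by \cite{lecu2017robust,lerasle2019lecture}. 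Everything else (centering by Condition~4, the Chebyshev bounds, and balancing the two scalar estimates) is routine bookkeeping.
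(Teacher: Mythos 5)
Your proposal is correct and rests on the same architecture as the paper's proof: a multiplier process and a quadratic process, each reduced to a single Chebyshev/Markov bound on one clean block and then made uniform over $\cS-\cS$ via Lemma~\ref{main}, with the VC dimension of the quadratic Boolean families controlled through Warren's theorem (Corollary~\ref{polyVC}, Propositions~\ref{important} and~\ref{lessimportant}). Two local choices differ, and it is worth recording what each buys. First, you prove a two-sided isomorphism $\tfrac14\Ba\subset\hat B_\Sigma\subset\alpha\Ba$ and test the criterion at $a$ against the $\Sigma$-normalised direction $-v/4$; the paper dispenses with the lower inclusion altogether by normalising the test vector empirically, $u^*=(\beta-\beta^*)\big/\bigl(\cQ_{1/4}^{k}\tfrac1m\sum_{i\in B_k}\braket{\beta-\beta^*,X_i}^2\bigr)^{1/2}$, which lies in $\hat B_\Sigma$ by construction — this saves one application of Lemma~\ref{main} and one term in the failure probability. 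Second, your lower bound on the empirical quadratic form goes through Paley–Zygmund plus a within-block Bernoulli concentration and yields $Q_k(u)\gtrsim\gamma^4\braket{u,\Sigma u}$, whereas the paper applies Chebyshev to the empirical first absolute moment $\tfrac1m\sum_{i\in B_k}|\braket{u,Y_i}|$ (this is exactly where $K\leq N\gamma^2/64$ is spent) and then uses $\tfrac1m\sum_i|\braket{u,X_i}|\leq\bigl(\tfrac1m\sum_i\braket{u,X_i}^2\bigr)^{1/2}$ to get the sharper $Q_k(u)\geq\tfrac{\gamma^2}{4}\braket{u,\Sigma u}$. Propagating your constants (including $\alpha\sim\gamma^{-2}$) gives a final bound with a higher power of $\gamma^{-1}$ than the stated $128\,\sigma^2K/(N\gamma^4)$; since Condition~3 declares $\gamma$ universal this is harmless for the theorem as an order-of-magnitude statement, but the explicit $\gamma^4$ in the display only comes out of the paper's Cauchy–Schwarz route. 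Everything else — the VC bookkeeping, the decomposition $\sum_{i\in B_k}(Z_i-\braket{a,X_i})\braket{u,X_i}=m\mu_k(u)+\tfrac{m}{4}\norm{h}_\Sigma\,Q_k(v)$, and the intersection of the two majorities of blocks (which the paper leaves at exactly $K/2$ and which you rightly tighten to a strict majority using the remark after Lemma~\ref{main}) — matches the paper's argument.
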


For all $\beta$, 
\[
l(\beta) = l(\beta^*)+ 2 \E(\xi_1 \braket{\beta-\beta^* , Y_1})+ (\beta-\beta^*) \Sigma (\beta-\beta^*)\leq l(\beta^*) + (\beta-\beta^*) \Sigma (\beta-\beta^*).\]
So, if $r=\sqrt{128}  \ \sigma  \sqrt{\frac{K}{N}} $ , then 
\[
l(\hat\beta)-l(\beta^*) \leq  r^2/\gamma^4.
\]

The conclusion of Theorem~\ref{theo:Reg} can be written as follows: for any $\delta \in [e^{-c N},1/2]$, there exists an estimator $\mu_\delta$ such that 

\begin{equation*}\label{eq:intro_subgaus_rate}
\braket{\hat \beta- \beta^*, \Sigma (\hat\beta-\beta^*)} \lesssim \sigma (\sqrt{\frac{ \VC(\cS)}{N}}+  \sqrt{\frac{\log(1/\delta)}{N}}+ \sqrt{\epsilon}).
\end{equation*}

Once again we notice the nice decoupling between complexity and deviation. This result is interesting for a several reasons, the main one being that this work is the first that gives a bound holding with exponential probability, that holds without assuming more than 2 moments on the design $Y_1$, even in the sparse setting. 
By comparison, \cite{lecu2017robust} or \cite{lugosi2017regularization} for instance, assume that at least $\log(d)$ subgaussian moments exist to achieve this kind of rate and offers no guarantees without that requirement and are the best to the date.

\subsection{Covariance estimation}

 This section studies the problem of robust covariance estimation. 
Consider Setting \ref{setting:first}, and assume that $\mu$ is known, fixed to $0$ without loss of generality.
We want to estimate $\Sigma$. This problem has a number of applications: the bounds we present can for instance easily be transposed (with the Davis–Kahan theorem) to the problem of robust PCA. 
It has already been studied in \cite{Wei2017EstimationOT}, or \cite{JMLR:v17:14-273}, but these estimators do not exhibit any decoupling between complexity and deviation. In \cite{mendelson2018robust}, the authors propose a robust estimator for covariance using the MOM method, and get the optimal complexity-deviation decoupling. They also give interesting comments and insights about this estimation problem. However, they need a $L_4-L_2$ norm equivalence in order to do so. In addition, they do not study the problem of low rank estimation that we present here. 
 
For any matrix $A$, define its spectral norm by 
$$\tnorm{A}=\sup_x\frac{\norm{Ax}_2}{\norm{x}_2}.$$
Let $\Sym(d)$ denote the set of $d$ dimensional symmetric positive matrices.
Assume that 
$$\sigma^2= \sup_{u\in \cB_2} \E\left(\braket{u,(\Sigma-Y_1 Y_1^T) u}^2\right) < \infty.$$

 This quantity is sometime refered to as \emph{weak variance} of a random matrix \cite{mendelson2018robust}. Our estimator is defined as follows 
 $$\hat{\Sigma}=\argmin_{M \in \Sym_d} \sup_{\norm{u}_2=1} \Med_k \braket{u,(\frac{1}{m}\sum X_i X_i^T-M) u}.$$
 It satisfies the following bound.
 \begin{theo} \label{theo:Cov1}
There exists an absolute constant $C$ such that, if $K \geq C (d \vee |\cO|) $, then, with probability larger than $1-\exp(- K/128)$,
 $$\tnorm{\hat{\Sigma}_K -\Sigma} \leq 8 \ \sigma \sqrt{\frac{K}{N}}.$$
 \end{theo}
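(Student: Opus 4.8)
The plan is to follow the proof of Theorem~\ref{main2} almost verbatim, with the Euclidean unit ball playing the role of the dual ball $\cB^*_0$ and with Proposition~\ref{important} providing the VC-dimension input. Throughout write $\overline{XX^T}_k=\frac1m\sum_{i\in B_k}X_iX_i^T$ and $\overline{YY^T}_k$ likewise, and set $r=4\sigma\sqrt{K/N}$. For every unit vector $v\in\R^d$ consider the two Boolean functions on $(\R^d)^m$ given by $\textbf{Y}_k\mapsto\mathbf{1}\{\braket{\pm vv^T,\ \overline{YY^T}_k-\Sigma}\ge r\}$, and let $\cF$ be the class they form. The two steps are then: apply Lemma~\ref{main} to $\cF$, and conclude by a short deterministic argument using that $\hat\Sigma$ is a minimiser of its criterion.

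I expect the crux to be the VC-dimension bound for $\cF$. The point is that every member of $\cF$ is the composition of the affine map $\textbf{Y}_k\mapsto\overline{YY^T}_k-\Sigma\in\cM_d(\R)$ with a map of the form $M\mapsto\mathbf{1}\{\braket{X,M}\ge r\}$ for a \emph{rank-one symmetric} matrix $X=\pm vv^T$. Since all such $X$ lie in $\cM_d^1(\R)$, the composition property (fact~2 of Section~\ref{sec:VC}) together with Proposition~\ref{important} applied with $k=1$ gives $\VC(\cF)\le\VC(\cM_d^1(\R))\lesssim d$. Hence the theorem's hypothesis $K\ge C(d\vee|\cO|)$ entails $K\ge C'(\VC(\cF)\vee|\cO|)$, which is exactly what Lemma~\ref{main} needs. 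The gain over a Rademacher-complexity treatment comes precisely from the fact that $\braket{v,\overline{YY^T}_k v}$ is a \emph{linear} functional of the block's empirical second-moment matrix, so the complexity of $\cF$ reduces to a dimension count of the right order $d$.

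For the probabilistic hypothesis of Lemma~\ref{main}: on a clean block, $\braket{vv^T,\overline{YY^T}_1-\Sigma}=\frac1m\sum_{i\in B_1}(\braket{v,Y_i}^2-\braket{v,\Sigma v})$ is centered with variance at most $\sigma^2/m=\sigma^2K/N$, by the definition of the weak variance $\sigma^2$ and $\norm{v}_2=1$; Chebyshev's inequality then bounds $\PP(\braket{\pm vv^T,\overline{YY^T}_1-\Sigma}\ge r)$ by $\sigma^2K/(Nr^2)=1/16$, so every $f\in\cF$ satisfies $\PP(f(\textbf{Y}_1)=0)\ge 15/16$. Lemma~\ref{main} then yields: with probability at least $1-\exp(-K/128)$, for every unit $v$ each of the two functions vanishes on at least $3K/4$ of the blocks; since a strict majority of blocks then has $\braket{vv^T,\overline{XX^T}_k-\Sigma}<r$ and a strict majority has it $>-r$, this forces $|\Med_k\braket{v,(\overline{XX^T}_k-\Sigma)v}|\le r$ for all unit $v$ simultaneously.

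On that event I would conclude exactly as in Theorem~\ref{main2}. Reading the criterion defining $\hat\Sigma$ as $M\mapsto\sup_{\norm{v}_2=1}|\Med_k\braket{v,(\overline{XX^T}_k-M)v}|$, the previous line says its value at $M=\Sigma$ is at most $r$, hence so is its value at $\hat\Sigma$; using that the median commutes with translation by a constant and subtracting the statements for $\Sigma$ and for $\hat\Sigma$ gives $|\braket{v,(\hat\Sigma-\Sigma)v}|\le 2r$ for every unit $v$, so that $\tnorm{\hat\Sigma-\Sigma}=\sup_{\norm{v}_2=1}|\braket{v,(\hat\Sigma-\Sigma)v}|\le 2r=8\sigma\sqrt{K/N}$, since $\hat\Sigma-\Sigma$ is symmetric. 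The only wrinkle relative to the mean-estimation case is the two-sided family $X=\pm vv^T$, which is needed because $\{vv^T:\norm{v}_2=1\}$ — unlike $\cB^*_0$ — is not symmetric; the remaining ingredients (Chebyshev, the deterministic optimisation) are routine.
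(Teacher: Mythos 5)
Your proof is correct and follows essentially the same route as the paper's: the paper simply packages the Chebyshev bound, the application of Lemma~\ref{main} with the VC bound from Proposition~\ref{important} ($k=1$), and the final argmin argument into the general Lemma~\ref{general} (applied with $g(x)=xx^T$, $U=\Sym(d)$, $V=\{uu^T:\norm{u}_2=1\}$), which you have unfolded inline. Your use of the two-sided family $\pm vv^T$ is in fact slightly more careful than the paper's own write-up, which identifies $\sup_{\norm{u}_2=1}\braket{u,(\Sigma-A)u}$ with $\tnorm{\Sigma-A}$ even though its index set $V$ is not symmetric.
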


 \begin{coro} Assume that $R = \sup_u \frac{\sqrt{\E\braket{u,Y})^4}}{\E\braket{u,Y}^2} < \infty$, then, for $K \geq C (d \vee |\cO|) $
   $$\tnorm{\hat{\Sigma}_K -\Sigma} \leq 8 \ R  \tnorm{\Sigma} \sqrt{\frac{K}{N}}.$$
 \end{coro}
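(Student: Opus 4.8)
The plan is to derive the corollary directly from Theorem~\ref{theo:Cov1} by bounding the weak variance $\sigma$ in terms of $R$ and $\tnorm{\Sigma}$. Recall that Theorem~\ref{theo:Cov1} already gives, on an event of probability at least $1-\exp(-K/128)$, the bound $\tnorm{\hat\Sigma_K-\Sigma}\leq 8\sigma\sqrt{K/N}$ with $\sigma^2=\sup_{u\in\cB_2}\E\big(\braket{u,(\Sigma-Y_1Y_1^T)u}^2\big)$; so it suffices to show that $\sigma\leq R\tnorm{\Sigma}$ and substitute.

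First I would fix a unit vector $u$ and observe that $\braket{u,(\Sigma-Y_1Y_1^T)u}=\E[\braket{u,Y_1}^2]-\braket{u,Y_1}^2$ is exactly the centered version of the nonnegative random variable $\braket{u,Y_1}^2$. Hence its second moment equals the variance of $\braket{u,Y_1}^2$, which is at most $\E[\braket{u,Y_1}^4]$.

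Then I would invoke the $L_4$–$L_2$ norm equivalence encoded in the definition of $R$: $\E[\braket{u,Y_1}^4]\leq R^2\big(\E[\braket{u,Y_1}^2]\big)^2=R^2\braket{u,\Sigma u}^2$. Since $\Sigma$ is symmetric positive semidefinite and $\norm{u}_2=1$, one has $\braket{u,\Sigma u}\leq\tnorm{\Sigma}$, so $\E\big(\braket{u,(\Sigma-Y_1Y_1^T)u}^2\big)\leq R^2\tnorm{\Sigma}^2$. Taking the supremum over $u\in\cB_2$ yields $\sigma\leq R\tnorm{\Sigma}$, and plugging this into Theorem~\ref{theo:Cov1} gives the stated inequality.

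There is no genuine obstacle here, since the corollary is a direct specialization of Theorem~\ref{theo:Cov1}; the only points requiring care are that the variance of a random variable is dominated by its second moment (so the centering in $\Sigma-Y_1Y_1^T$ can only help) and that for a positive semidefinite matrix the quadratic form on the unit sphere is controlled by the operator norm.
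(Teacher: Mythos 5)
Your proof is correct and is exactly the intended derivation: the paper states this corollary without proof as an immediate consequence of Theorem~\ref{theo:Cov1}, obtained by bounding the weak variance $\sigma$ via $\mathrm{Var}(\braket{u,Y_1}^2)\leq\E[\braket{u,Y_1}^4]\leq R^2\braket{u,\Sigma u}^2\leq R^2\tnorm{\Sigma}^2$. Nothing is missing.
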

 
The ``bounded kurtosis assumption" $R<\infty$ appears similarly in \cite{JMLR:v17:14-273}.
In \cite{JMLR:v17:14-273}, the estimator achieves a bound of order $r(\Sigma)\tnorm{\Sigma} \sqrt{K/N}$ where $r(\Sigma)$ is the effective rank of the covariance matrix: once again, the complexity $r(\Sigma)\tnorm{\Sigma}$ is multiplied by the deviation term $K\propto \log(1/\delta)$ in this case, while here they are decoupled : the dimension does not multiply $K$ in our bound. In \cite{mendelson2018robust}, authors give a better rate (because they only need $K$ to be larger than $r(\Sigma)$ instead of $d$ for the bound to hold) but they use the $L_4-L_2$ norm equivalence, which is a strong hypothesis we do not need here. \\

To conclude this section, we propose an estimator for covariance estimation for the Frobenius norm. 
What is interesting here is that, when we know that the covariance matrix has rank $r$, we can exploit the low-rank assumption in the VC-dimension analysis, while it might be harder to grasp using other forms of complexity. 
This work is the first, to our knowledge, to exhibit the decoupling between complexity and deviation for this problem, and this estimator is the first of its kind. 
We note $\norm{\cdot}_F$ the Frobenius norm for matrices, and $\braket{\cdot,\cdot}_F$ the inner product for matrices. 

\begin{theo}\label{theo:Cov2}
Let
$$\hat{\Sigma}=\argmin_{M \in \cM_d^r(\R)} \ \ \sup_{ U \in \cM_d^{2r}(\R), \ \norm{U}_F=1} \Med_k \braket{\frac{1}{m}\sum X_i X_i^T-M),U}_F.$$
There exists an universal constant $C$ such that, if $\rank(\Sigma)\leq r$ and $K \geq C (d \times r \vee |\cO|) $, then, with probability larger than $1-\exp(- K/128)$,
 $$\norm{\hat{\Sigma}_K -\Sigma}_F \leq 8 \ \sigma \sqrt{\frac{K}{N}}.$$
\end{theo}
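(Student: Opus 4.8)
The plan is to run the same scheme as in Theorems~\ref{main2} and \ref{theo:Cov1}: reduce the Frobenius bound to a uniform control of block medians, and then obtain that control from Lemma~\ref{main}. Write $\bar Z_k=\frac1m\sum_{i\in B_k}X_iX_i^T$ and $\bar W_k=\frac1m\sum_{i\in B_k}Y_iY_i^T$, set $r_K=8\sigma\sqrt{K/N}$, and for $M\in\cM_d^r(\R)$ put
\[
\cC(M)=\sup_{U\in\cM_d^{2r}(\R),\,\norm{U}_F=1}\Med_k\braket{\tfrac1m\textstyle\sum_{i\in B_k}X_iX_i^T-M,\,U}_F,
\]
so that $\hat\Sigma=\argmin_{M\in\cM_d^r(\R)}\cC(M)$. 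The key event I would work on is
\[
\Omega=\Bigl\{\forall\,U\in\cM_d^{2r}(\R),\ \norm{U}_F=1:\ \#\{k\le K:\ |\braket{\bar Z_k-\Sigma,U}_F|\le r_K/2\}\ge 3K/4\Bigr\}.
\]

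First I would show that $\Omega$ forces the conclusion. On $\Omega$, for every admissible $U$ more than half of the blocks put $\braket{\bar Z_k-\Sigma,U}_F$ in $[-r_K/2,r_K/2]$, so $\Med_k\braket{\bar Z_k-\Sigma,U}_F\in[-r_K/2,r_K/2]$ and hence $\cC(\Sigma)\le r_K/2$; since $\rank(\Sigma)\le r$, $\Sigma$ is feasible, so $\cC(\hat\Sigma)\le\cC(\Sigma)\le r_K/2$. Now if $\rho:=\norm{\hat\Sigma-\Sigma}_F>0$, the matrix $\hat\Sigma-\Sigma$ is symmetric of rank $\le 2r$, so $U_0=(\Sigma-\hat\Sigma)/\rho$ is admissible; from $\braket{\bar Z_k-\hat\Sigma,U_0}_F=\braket{\bar Z_k-\Sigma,U_0}_F+\rho$ we get $\Med_k\braket{\bar Z_k-\hat\Sigma,U_0}_F\ge\rho-r_K/2$ on $\Omega$, whence $r_K/2\ge\cC(\hat\Sigma)\ge\rho-r_K/2$, i.e.\ $\norm{\hat\Sigma-\Sigma}_F\le r_K=8\sigma\sqrt{K/N}$.

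Next I would show $\PP(\Omega)\ge 1-\exp(-K/128)$ by applying Lemma~\ref{main} to the class $\cF=\{f_U:\,U\in\cM_d^{2r}(\R),\,\norm{U}_F=1\}$ with $f_U(\textbf{Y}_k)=\mathbf 1\{|\braket{\bar W_k-\Sigma,U}_F|>r_K/2\}$, noting that $f_U(\textbf{X}_k)=\mathbf 1\{|\braket{\bar Z_k-\Sigma,U}_F|>r_K/2\}$ and that the conclusion of Lemma~\ref{main} for this $\cF$ is exactly $\Omega$. The first hypothesis of Lemma~\ref{main} follows from Chebyshev: $\braket{\bar W_1-\Sigma,U}_F$ is an average of $m$ i.i.d.\ centered terms $\braket{Y_iY_i^T-\Sigma,U}_F$, whose variance is controlled by the weak-variance assumption, so $\PP(|\braket{\bar W_1-\Sigma,U}_F|>r_K/2)\le\sigma^2/(m(r_K/2)^2)=N/(16mK)=1/16$ (using $mK=N$), i.e.\ $\PP(f_U(\textbf{Y}_1)=0)\ge 15/16$. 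The second hypothesis needs $\VC(\cF)\lesssim dr$: parametrizing $U\in\cM_d^{2r}(\R)$ by $(\lambda_j,u_j)_{j\le 2r}\in\R^{2r(d+1)}$ via $U=\sum_j\lambda_ju_ju_j^T$, the map $\theta\mapsto\braket{M,U(\theta)}_F$ is, for fixed symmetric $M$, a polynomial of degree $3$ in these $2r(d+1)$ variables, so $|\braket{M,U(\theta)}_F|>r_K/2$ is the positivity of a polynomial of degree $6$; by Warren's theorem and Corollary~\ref{polyVC} the resulting class of functions of $M$ has VC-dimension $\lesssim 2r(d+1)\log(e)\lesssim dr$, and composing with $(Y_i)_{i\in B_k}\mapsto\bar W_k-\Sigma$ via the composition property recalled in Section~\ref{sec:VC} gives $\VC(\cF)\lesssim dr$ — this is the matrix analogue of Proposition~\ref{important}. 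Then $K\ge C(d\times r\vee|\cO|)$ implies $K\ge C'(\VC(\cF)\vee|\cO|)$, Lemma~\ref{main} applies, and combining with the first step proves the theorem.

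The one step that requires genuine care is the VC bound for $\cF$: one must check that the chosen parametrization of $\cM_d^{2r}(\R)$ together with the absolute value yield a single polynomial whose degree and number of variables ($2r(d+1)$) are controlled in the right way, so that Corollary~\ref{polyVC} produces the clean bound $\VC(\cF)\lesssim dr$ that matches the hypothesis $K\ge C(d\times r\vee|\cO|)$; the argmin/feasibility argument of Step~1 and the Chebyshev estimate are routine and parallel the proof of Theorem~\ref{theo:Cov1}. One point worth recording is that the fluctuations entering the Chebyshev bound are those of $\braket{Y_1Y_1^T-\Sigma,U}_F$ for $U\in\cM_d^{2r}(\R)$ with $\norm{U}_F=1$, so it is the weak variance read along these rank-$\le 2r$ Frobenius directions (still written $\sigma$) that governs the rate.
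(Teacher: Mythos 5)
Your proposal is correct and follows essentially the same route as the paper: the paper simply factors the whole argument through its generic Lemma~\ref{general} (applied with $g:x\mapsto xx^T$, $U=\cM_d^r(\R)$, $V=\{U\in\cM_d^{2r}(\R):\norm{U}_F=1\}$), whose proof is exactly the Chebyshev-plus-Lemma~\ref{main}-plus-argmin/median scheme you unroll by hand, and the VC bound $\VC(V)\lesssim dr$ is precisely Proposition~\ref{important} with $k=2r$, obtained by the same polynomial parametrization and Corollary~\ref{polyVC} that you describe. The one point where you diverge is the weak-variance term: the theorem's $\sigma$ is defined over rank-one directions $uu^T$, $u\in\cB_2$, whereas your Chebyshev step needs $\sup\,\E\braket{Y_1Y_1^T-\Sigma,U}_F^2$ over Frobenius-unit rank-$2r$ matrices $U$; you flag this honestly but leave $\sigma$ redefined, while the paper closes the gap by decomposing $U=\sum\lambda_i x_ix_i^T$ with $x_i\in\cB_2$, $\sum\lambda_i=1$ and invoking convexity of $t\mapsto t^2$. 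If you want your proof to yield the theorem with the stated $\sigma$, you should add that reduction (and note that it requires some care about the signs and normalization of the $\lambda_i$ in the eigendecomposition of $U$).
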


The conclusion of Theorem~\ref{theo:Cov2} can be written as follows: for any $\delta \in [e^{-c N},1/2]$, there exists an estimator $\Sigma_\delta$ such that 

\begin{equation*}\label{eq:intro_subgaus_rate}
||\Sigma_\delta-\Sigma||_2 \lesssim \sigma (\sqrt{\frac{ r d }{N}}+  \sqrt{\frac{ \log(1/\delta)}{N}}+ \sqrt{\epsilon}).
\end{equation*}

Compared with \cite{JMLR:v17:14-273}, we note that this bound on the risk of our estimator does decouple the complexity $ r d $ and the deviation $ \log(1/\delta)$ as announced. Again it holds under only a $L_2$ moment assumption, with no extra subgaussian moment needed to get this bound.

\section{An algorithm to improve risk bounds} \label{last}
\label{sec:SDP}

The different applications of Lemma~\ref{main} show that, in general, the complexity term derived from this result is not optimal.
For example, for mean estimation estimation in Euclidean norm, the complexity term reached by our estimator is proportional to $\sqrt{\lambda_1(\Sigma)d/N}$, where the best rate would be $\sqrt{\Tr(\Sigma)/N}$. 

In this section, we provide an algorithm that leads to better and in some cases optimal complexity rates.  
The price to pay is that these new estimators require some knowledge on the covariance matrix $\Sigma$. 
We will consider the example of sparse mean estimation for the sake of clarity, but we argue that it also holds for sparse (and non-sparse) regression.
We therefore use the setting \ref{setting:first} and assume furthermore that the mean $\mu$ belongs to $\cU_s(v_1,...,v_d)$, where the set of vectors $(v_1,...,v_d) \in \R^d$ is fixed and known. \\

Let $\lambda_1, \lambda_2, ..., \lambda_d$ denote the eigenvalues of $\Sigma$ in decreasing order, and let $e_1, ..., e_d$ denote a set of normalized corresponding eigenvectors. 
For any $ 1 \leq n <  \lfloor \log_2(d) \rfloor :=n_{l}$, let $s_n$ denote the largest index such that $\lambda_{s_n} \geq \lambda_1/ 2^n$.
In particular, $\lambda_1 \geq ... \geq \lambda_{s_1}\geq  \lambda_1/2 > \lambda_{s_1+1}...$.
By convention, let $s_0=0$. 
Finally, we note $E_n=\Vect\{e_{s_{n-1}+1}, e_{s_{n-1}+2},...,  e_{s_n} \}$, with convention $E_{n_{l}}=\Vect(e_{s_{n_{l}-1}+1}, e_{s_{n_l-1}+2},...,  e_{d})$. 
If we know the matrix $\Sigma$, we can identify the eigenspaces $E_n$ and thus compute the orthogonal projections of the data on these subspaces: $X_k^i: =\mathbf{proj}_{E_i}(X_k)$, for $i \in \{1, ..., n_l\}$ and $k \in \{1, ..., K \}$.

In Section~\ref{sparse_part}, we described a procedure that takes as input an integer $K \geq c_0 (\tilde{s} \log(\tilde d/\tilde{s}) \vee |\cO|) $ and a (possibly corrupted)  dataset $Z_1, ... Z_N$ having common mean $\tilde \mu$ which is $ \tilde{s}$-sparse relatively to a set of vectors $(u_1, u_2, ...,  u_{\tilde{d}} )$ and common covariance matrix $\tilde \Sigma$.
The procedure returns $\hat \mu_K$ satisfying, with probability at least $1-\exp(-c_1 K)$
 $$\|\hat{\mu}_K -\mu \|_2 \leq 8  \sqrt{\frac{\tnorm{\tilde \Sigma} K}{N}}.$$ 
  
Let $\texttt{proc}(Z_1, ..., Z_N, K, u_1, u_2, ..., u_{\tilde{d}}, \tilde{s})$ denote the output of this procedure. 
The idea of the algorithm is to project on the subspaces $E_i$ and apply this preliminary procedure on those subspaces. 
Let $d_i$ the dimension of $E_i$. 
The algorithm is formally defined as follows:
     
  \vspace{0.7cm}
 \begin{algorithm}[H]\label{algo:almost_final}
\SetKwInOut{Input}{input}\SetKwInOut{Output}{output}\SetKw{Or}{or}
\SetKw{Return}{Return}
\Input{$X_1, \ldots, X_N$ and $K \geq |\cO|$. }
\Output{A robust subgaussian estimator $\hat \mu_\delta$.}  
\BlankLine
$i \leftarrow 1$.\\
\While{ $i\leq n_l$}{
Compute $X_1^i, ..., X_n^i$. \\
Compute $u_1^i, u_2^i, ... , u_{d}^i $ the orthogonal projections of $v_1, v_2, ... v_{d}$ onto $E_i$. \\
$K_i \leftarrow K \times 2^{i-1}$.\\
\pagebreak
\If{$d_i < s \log(d/s)$}{$\mu_i\leftarrow \texttt{proc}(X_1^i, ..., X_n^i, K_i, e_{s_{i-1}+1}, ..., e_{s_{i}}, d_i)$.\\}
\Else{$\mu_i\leftarrow \texttt{proc}(X_1^i, ..., X_n^i, K_i, u_1^i, u_2^i, ...,  u_{d}^i ,s)$.\\}
$i \leftarrow i+1$.}
$\hat \mu_K \leftarrow \sum_{j\leq i } \mu_i $.\\
\Return $\hat \mu_K$.
 \caption{Pseudo-code of the robust sub-gaussian estimator of $\mu$}
\end{algorithm}
\vspace{0.7cm}

The algorithm produces an estimator $\hat \mu_K$ satisfying the following result.
  
\begin{prop}\label{prop:BetterRB1}
Assume setup \ref{setting:first}. There exists an absolute constant $C$ such that, if $K \geq  C [\sum_i 2^{-i} (d_i \wedge s \log(d/s) ) \vee |\cO|]$, the output $\hat{\mu}_K$ of Algorithm \ref{algo:almost_final}, satisfies, with probability $\geq 1-  2\exp(-K/128)$,
$$\|\hat{\mu}_K -\mu \|_2 \leq 8 \log(d)  \sqrt{\frac{\lambda_1 K}{N}}.$$
\end{prop}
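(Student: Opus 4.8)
The plan is to reduce Proposition~\ref{prop:BetterRB1} to the sparse mean estimator $\texttt{proc}$ of Theorem~\ref{sparseTheo}, run separately on each eigenspace $E_1,\dots,E_{n_l}$, and then to glue the pieces: on $E_i$ the covariance of the data has operator norm $\lesssim \lambda_1/2^{i-1}$, which is exactly compensated by the fact that we are allowed $K_i=K2^{i-1}$ blocks on that subspace.

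First I would record three structural facts. Since the $E_i$ are the linear spans of disjoint blocks of eigenvectors of the symmetric matrix $\Sigma$ and together span $\R^d$, the orthogonal projections $P_i=\mathbf{proj}_{E_i}$ satisfy $\sum_i P_i=\Id$, and the covariance of $P_iY_1$, namely $P_i\Sigma P_i$, is symmetric positive with largest eigenvalue $\lambda_{s_{i-1}+1}\le\lambda_1/2^{i-1}$ by the very definition of $s_{i-1}$. Because projection is linear, $(P_iX_1,\dots,P_iX_N)$ is again an adversarially corrupted sample of the i.i.d.\ vectors $(P_iY_1,\dots,P_iY_N)$ with the \emph{same} corruption set $\cO$, and the common mean $P_i\mu$ is $s$-sparse over the projected dictionary $(u_1^i,\dots,u_d^i)$ (and trivially $d_i$-sparse over an orthonormal basis of $E_i$). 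Finally, in either branch of the \texttt{If}, the dictionary handed to $\texttt{proc}$ spans a subspace of $E_i$, so the returned $\mu_i$ lies in $E_i$.

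Next I would verify the block-count hypothesis. The call $\texttt{proc}(\dots,K_i,\dots)$ is valid as soon as $K_i\ge c_0\big((d_i\wedge s\log(d/s))\vee|\cO|\big)$: in the branch $d_i<s\log(d/s)$ it is plain $d_i$-dimensional mean estimation on $E_i$ (complexity $\lesssim d_i$), in the other branch it is $s$-sparse estimation over a dictionary of size $d$ (complexity $\lesssim s\log(d/s)$ by Corollary~\ref{sparseVC}). Since $K_i=K2^{i-1}$, dividing by $2^{i-1}$ shows that all $n_l$ calls are valid once $K\ge C\big[\sum_i2^{-i}(d_i\wedge s\log(d/s))\vee|\cO|\big]$ with $C$ large enough — the $|\cO|$ requirement is worst at $i=1$, and $\max_i2^{-i}(d_i\wedge s\log(d/s))\le\sum_i2^{-i}(d_i\wedge s\log(d/s))$. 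On the intersection of the $n_l$ success events of Theorem~\ref{sparseTheo}, whose complement has probability at most $\sum_{i\ge1}\exp(-K2^{i-1}/128)\le2\exp(-K/128)$ (using $2^{i-1}\ge i$), each output satisfies
$$\norm{\mu_i-P_i\mu}_2\le 8\sqrt{\frac{\tnorm{P_i\Sigma P_i}\,K_i}{N}}\le 8\sqrt{\frac{(\lambda_1/2^{i-1})(K2^{i-1})}{N}}=8\sqrt{\frac{\lambda_1K}{N}}.$$

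To finish, I would observe that $\hat\mu_K-\mu=\sum_i(\mu_i-P_i\mu)$ with the $i$-th term in $E_i$, so the terms are pairwise orthogonal and $\norm{\hat\mu_K-\mu}_2^2=\sum_i\norm{\mu_i-P_i\mu}_2^2\le 64\,n_l\,\lambda_1K/N$; since $n_l\le\log_2 d$ this gives the stated bound (even the crude triangle inequality, producing $n_l$ rather than $\sqrt{n_l}$, suffices). I expect no conceptual difficulty: the work is entirely bookkeeping — checking that $|\cO|$ and the sparsity level survive the projections, that the per-subspace spectral bound $\lambda_1/2^{i-1}$ is matched to the block count $2^{i-1}$, and handling the degenerate cases (empty $E_i$, and the convention $E_{n_l}=\Vect(e_{s_{n_l-1}+1},\dots,e_d)$ for the last block).
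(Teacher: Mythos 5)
Your proposal is correct and follows essentially the same route as the paper: project onto the eigenspaces $E_i$, apply the preliminary procedure with whichever dictionary (orthonormal basis of $E_i$ or projected sparse dictionary) gives the smaller complexity, use the fact that $\tnorm{P_i\Sigma P_i}\le\lambda_1/2^{i-1}$ cancels against $K_i=K2^{i-1}$, union-bound the failure events via $\sum_i\exp(-2^{i-1}K/128)\le 2\exp(-K/128)$, and conclude by Pythagoras on the orthogonal eigenspaces. If anything, your write-up is more careful than the paper's about the per-subspace spectral bound, the validity of each call to $\texttt{proc}$, and the final $\sqrt{n_l}$ versus $\log d$ accounting.
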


The complexity term in Proposition~\ref{prop:BetterRB1} is better than in Theorem~\ref{sparseTheo}, because $\sum_i 2^{-i} (d_i \wedge s \log(d/s) ) \leq s \log(d/s)$. 
More importantly, this complexity term depends on the the covariance structure of the data through the $d_i$. 
In the case of sparse mean estimation, we can deduce a precise estimate of this complexity term: for any $\delta \in [e^{-c N},1/2]$, there exists an estimator $\hat\mu_\delta$ such that:
\begin{equation*}
\norm{\hat \mu_\delta - \mu}_2 \leq   C \log(d) \left( \sqrt{\frac{\log(d/s)\sum_{i=1}^s \lambda_i}{N}} +  \sqrt{\frac{\lambda_1 \log(1/\delta)}{N}}+\sqrt{\lambda_1 \epsilon } \right).
\end{equation*}

This estimate comes from the bounds $\sum_{i=1}^s \lambda_i \geq \sum_{i\leq j+1}  \lambda_1 2^{-i-1} d_i \wedge s $, where $j$ is such that $\sum_{i\leq j} d_i\leq s \leq \sum_{i\leq j+1} d_i $. 
We then write $\sum_i 2^{-i} (d_i \wedge s \log(d/s) )\leq \log(d/s) \sum_{i\leq j+1} 2^{-i} d_i\wedge s +  2^{-j} s \log(d/s) \leq 4 \log(d/s) \sum_{i=1}^s \lambda_i  $. \\

Thies result is proved in Section~\ref{preuve:RB1}. 
We argue that the very same proof can be replicated for the regression problem, if  $\E(\xi_1^2 Y_1 Y_1^T) $ has the same eigenspaces as $\E( Y_1 Y_1^T) $.
This happens for instance when $\xi$ is independent of $Y$.  We end up with the bound of Theorem \ref{theo:Reg}, holding whenever $K \geq  C [\sum_i 2^{-i} (d_i \wedge \VC(\cS ) \vee |\cO|]$ instead of $K \geq C [\VC(\cS  \vee |\cO|] $


\section{Conclusion : concurrent work and discussion}

This work is not the first to deal with robust estimation: a lot of results and algorithms have been developed over the past few years for sparse estimation in presence of outliers  (see for instance \cite{li2017robust}) but most of these works assume that non-corrupted data are Gaussian. For instance \cite{MR3845006} already deal with mean and covariance estimation using extensions of the Tukey-depth (and using VC-dimension), but their methods rely on informative data being Gaussian. 

Robustness to heavy-tailed data has also been studied in various works, see \cite{lugosi2019mean} for a survey of recent developments. 
We already mentioned two articles that this work tries to complete and improve: \cite{lugosi2018nearoptimal} for mean estimation under any norm and \cite{lecu2017robust} for sparse regression.
Though the techniques involved are close, this work illustrates that using VC-dimension can drastically improve risk bounds in various applications, in particular in the sparse setting.\\

Concurrent work : After the initial submission of this manuscript, we became aware of two concurrent works \cite{prasad2019unified} and \cite{prasadFDP}. Authors use an approach based on the $1/2$-cover of the unit sphere to deal with mean and sparse-mean estimation for the euclidean norm (\cite{prasad2019unified}), and with covariance estimation for spectral norms (\cite{prasadFDP}). They get close-to-optimal bounds for those two problems, with a remaining extra-logarithmic term. They do not tackle mean estimation in any norms, regression or low-rank covariance estimation. \\

There are still many exciting open questions. 
The quantity that is crucial in all the studies is  $$\E\left(\sup_{f} \sum_{k=0}^K f(\textbf{Y}_k)-K\E(f(\textbf{Y}_k))\right)$$ where $f$ are boolean functions. 
In mean estimation for instance,  $\E\left(\sup_{v \in V} \sum_{k=0}^K \mathbf{1}_{\braket{\bar{Y}_k-\mu, v} \geq r}-K\E(\mathbf{1}_{\braket{\bar{Y}_k-\mu, v} \geq r})\right)$ is the important quantity. 
Bounding this quantity using the VC-dimension of $V$ yields a bound independant of the covariance of $Y$. 
On the other hand, bounding that quantity by the Rademacher complexity of the $Y_i$ (like in \cite{lecu2017robust},  \cite{lugosi2018nearoptimal} or here in Part \ref{GenMeth}) stating that  \[
\E(\sup_{v \in V} \mathbf{1}_{\braket{\bar{Y}_k-\mu, v} \geq r}-K\E(\mathbf{1}_{\braket{\bar{Y}_k-\mu, v}}))< K \frac{\sqrt{\E(\|\tilde Y\|)}}{r\sqrt{N}}
\]
does not exploit the boundedness of the indicator function and necessitates unnecessary stronger assumptions on data. 
The ideal would be to conciliate both ideas, and to find a nice in-between that would take into account both the boudedness and the dependency in the covariance structure.

 The last point we make is about computational issues. 
 The estimators presented can not be implemented as is. 
 Nevertheless, encouraging recent works have shown that "relaxed", computable estimators can be derived from this kind of work. For instance the pioneer work of \cite{hopkins2018sub}, followed by  \cite{depersin2019robust} and \cite{Bartlett19} for instance, derived tractable estimators, in polynomial times, from the work of \cite{lugosi2019sub}. Even more recently, some new tractable estimators for regression and covariance estimation with heavy-tailed data have emerged in \cite{cherapanamjeri2019algorithms}. We can hope for this work to be made tractable as well, which seems to be quite a challenge.

\section{Proofs} \label{s:Proof}

\subsection{A fact about VC-dimension}

For any Euclidean space $E$, and any $C \in E$

\begin{lemma}
$\VC(\{x \in E \rightarrow \mathbf{1}_{\braket{x,v} \geq r}, v \in C\}) \leq  \VC(C-C) \leq c_0 \VC(C) $ where $c_0$ is universal constant.
\end{lemma}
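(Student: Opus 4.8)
The plan is to prove the two inequalities separately: the left one by an explicit shattering argument, the right one by reducing to a statement about a product index set. Throughout, for a subset $C$ of a Euclidean space $E$ and $r\in\R$, write $\cF_r:=\{x\mapsto\mathbf{1}_{\braket{x,v}\ge r}:v\in C\}$, and recall that $\VC(C-C)$ denotes the VC-dimension of the family of homogeneous halfspaces with normals in $C-C$.

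\emph{Left inequality.} Suppose $S=\{x_1,\dots,x_n\}$ is shattered by $\cF_r$, so that for every $B\subseteq\{1,\dots,n\}$ there is a witness $v_B\in C$ with $\braket{x_i,v_B}\ge r\iff i\in B$. The idea is a \emph{complement trick}: for each $B$, pair the witness of $B$ with the witness of its complement and set $w_B:=v_B-v_{B^c}\in C-C$. If $i\in B$ then $i\notin B^c$, hence $\braket{x_i,v_B}\ge r$ while $\braket{x_i,v_{B^c}}<r$, so $\braket{x_i,w_B}>0$; if $i\notin B$ then symmetrically $\braket{x_i,w_B}<0$. Thus $\{i:\braket{x_i,w_B}\ge0\}=B$ for every $B$, i.e.\ $S$ is already shattered by the homogeneous halfspaces with normals in $C-C$. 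This gives $\VC(\cF_r)\le\VC(C-C)$.

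\emph{Right inequality.} Here I would linearize the difference. Let $\Phi:E\to E\times E$ be $\Phi(x)=(x,-x)$, so that $\braket{x,v-v'}=\braket{\Phi(x),(v,v')}$ for all $v,v'$. Hence the halfspace family indexed by $C-C$ is the precomposition with $\Phi$ of the homogeneous-halfspace family on $E\times E$ indexed by $C\times C:=\{(v,v'):v,v'\in C\}$, and since composition does not increase the VC-dimension this yields $\VC(C-C)\le\VC(C\times C)$. It then remains to show $\VC(C\times C)\le c_0\VC(C)$: this is the subadditivity of the VC-dimension under products of the index set, which I would obtain by a Sauer--Shelah counting over the two coordinates — equivalently, it is an instance of the standard bounds on the VC-dimension of classes of thresholded sums of functions drawn from a fixed class (the tools in \cite{10.5555/524030} suffice). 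Combining the two steps gives the statement, with a moderate absolute constant $c_0$.

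The subtle point, and the one I expect to be the main obstacle, is this last bound: the predicate $(v,v')\mapsto\mathbf{1}_{\braket{y,v}+\braket{z,v'}\ge0}$ is \emph{not} a Boolean combination of the two coordinate predicates — two real numbers are added before the sign is taken — so the estimate cannot be read off directly from applying Sauer's lemma separately in each coordinate. One instead has to fix the first coordinate, observe that the family induced in the second coordinate is a family of affine thresholds, and control its growth function; interleaving this with Sauer's lemma on the first coordinate is what produces the $O(\VC(C))$ bound. Everything else is bookkeeping.
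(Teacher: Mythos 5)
Your treatment of the first inequality is exactly the paper's: the complement trick (pairing the witness of $B$ with the witness of $B^c$ and taking their difference) is the argument given there verbatim, and it is correct. The linearization $\Phi(x)=(x,-x)$, reducing $\VC(C-C)$ to the VC-dimension of homogeneous halfspaces on $E\times E$ with normals in $C\times C$, is also exactly the paper's reduction. Up to that point the two proofs coincide.

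The divergence, and the gap, is in the final step $\VC(C\times C)\le c_0\VC(C)$. The paper does not prove this; it invokes Theorem~1.1 of the cited van der Vaart--Wellner note on VC-dimensions of product classes. You instead propose to prove it by a Sauer--Shelah count, and you correctly flag the obstruction — the label $\mathbf{1}_{\braket{x_i,v}+\braket{y_i,w}\ge 0}$ is not a function of the two coordinate labels $\mathbf{1}_{\braket{x_i,v}\ge 0}$ and $\mathbf{1}_{\braket{y_i,w}\ge 0}$ — but your proposed repair does not close it. Fixing $v$ and viewing the second coordinate as an affine-threshold family means working with the thresholds $t_i(v)=-\braket{x_i,v}$, which are real numbers varying continuously with $v$; Sauer's lemma applied to the first coordinate only controls the number of distinct \emph{sign patterns} $(\mathbf{1}_{\braket{x_i,v}\ge 0})_i$, which carries no information about the vector $(t_i(v))_i$. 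So the ``interleaving'' does not yield a finite product bound on the growth function: there are infinitely many essentially different induced threshold families, one per value of $(t_i(v))_i$. Moreover, bounding the VC-dimension of each such family $\{i\mapsto\mathbf{1}_{\braket{y_i,w}\ge t_i}:w\in C\}$ by the complement trick lands you back on $\VC(C-C)$, i.e.\ the quantity you are trying to control — the argument is circular. As written, the last step is a plan rather than a proof, and it is precisely the step the paper outsources to an external theorem; to make your route self-contained you would need a genuinely different mechanism (the combinatorial argument of the cited note, or a direct argument exploiting the specific structure of $C$), not an iteration of Sauer's lemma over the two coordinates.
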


\begin{proof}
Assume that a set $x_1, ... , x_d \in E$ is shattered by $\cF=\{x \in E \rightarrow \mathbf{1}_{\braket{x,v} \geq r}, v \in C\}$. Then, for any $I \subset \{1,2, ..., d\}$, there is a vector $v_1$ so that $\braket{v_1,x_i} \geq r $ if and only if $i \in I $. There is a vector $v_2$ so that $\braket{v_2,x_i} < r $ if and only if $i \in I $. Then we have $\braket{v_1-v_2, x_i} \geq 0$ if and only if $i \in I $, so $\{x \in E \rightarrow \mathbf{1}_{\braket{x,v} \geq 0}, v \in C-C\}$ shatters $x_1, ... , x_d$, and  $\VC(\{x \in E \rightarrow \mathbf{1}_{\braket{x,v} \geq r}, v \in C\}) \leq  \VC(C-C)$. \\

Now we see that  $\VC(\{(x,y) \in E^2 \rightarrow \mathbf{1}_{\braket{x,v}+ \braket{y,w}  \geq 0}| (v,w) \in C\times C\}) \geq \VC(C-C)$ because if $x_1, ... , x_d \in E$ is shattered by $\{x \in E \rightarrow \mathbf{1}_{\braket{x,v} \geq 0}, v \in C-C\}$ then $((x_1, -x_1), ... , (x_d-x_d)) \in E\times E$ is shattered by $\{(x,y) \in E^2 \rightarrow \mathbf{1}_{\braket{x,v}+ \braket{y,w}  \geq 0}| (v,w) \in C\times C\}$. Theorem 1.1 in \cite{vandervaart} states that $\VC(C \times C) \leq c_0 \VC(C)$ for some constant $c_0$, and that concludes the proof.
\end{proof}

\subsection{General methodology} \label{GenMeth}
We begin by proving the main lemma \ref{main} of Part \ref{mean_part}

\begin{proof}
We want to prove that, with probability $\geq 1-\exp(-K/128)$,
$$\sup_{f} \sum_{k=0}^K f(\textbf{X}_k) \leq K/4.$$
If $C \geq 16$, $K \geq 16 |\cO|$ and it is sufficient to show that $\sup_{f} \sum_{k=0}^K f(\textbf{Y}_k) \leq 3K/16$ by Remark \ref{bravo}. 
Now we write
$$\sup_{f} \sum_{k=0}^K f(\textbf{Y}_k) \leq \underbrace{\sup_{f} \sum_{k=0}^K f(\textbf{Y}_k)-\E\left(\sup_{f} \sum_{k=0}^K f(\textbf{Y}_k)\right)}_{Deviation = D}+\underbrace{\E\left(\sup_{f} \sum_{k=0}^K f(\textbf{Y}_k)\right)}_{Magnitude=M}. $$

By the bounded difference inequality \cite[Theorem~6.2]{MR3185193}, with probability $\geq 1-\exp(K/128)$, $D \leq K/16$.

For the magnitude term, we write 
$$M \leq  \E\left(\sup_{f} \sum_{k=0}^K f(\textbf{Y}_k)-K\E(f(\textbf{Y}_k))\right)+ \sup_{f} K\E(f(\textbf{Y}_k)) .$$

By hypothesis, $\sup_{f} K\E(f(\textbf{Y}_k)) \leq K/16 $. Then, we just have to use a classical result of Vapnik-Chervonenkis theory, either in the version of \cite[Theorem 8.3.23]{vershynin_2018}, or of \cite[Corollary 7.18]{Handel_2016}.
There exists a universal constant $C'$ such that

$$\E\left(\sup_{f} \sum_{k=0}^K f(\textbf{Y}_k)-K\E(f(\textbf{Y}_k))\right) \leq C' K \sqrt{ \frac{\VC(\cF)}{K}}.$$

Hence, if $K \geq 256 \ C'^2 \VC(\cF)$, 

$$\E\left(\sup_{f} \sum_{k=0}^K f(\textbf{Y}_k)-K\E(f(\textbf{Y}_k))\right) \leq \frac{K}{16}.$$

Putting everything together, we have the following.
If $C\geq 256  \ C'^2$, with probability $\geq 1-\exp(K/128)$, $\sup_{f} \sum_{k=0}^K f(\textbf{Y}_k) \leq K/16+K/16+K/16$. 
Therefore, by Remark \ref{bravo}, for all $f \in F$

$$\sum_{k=0}^K f(\textbf{X}_k)\leq K/4.$$

\end{proof}

We state a technical lemma that appears in most proofs. Let $g$ be any measurable function  $\R^{d} \rightarrow E $ so that $\E(g(Y_1))$ exists.  We take
$$ \hat a = \argmin_{a \in U } \max_{v \in V} \Med \braket{\frac{1}{m} \sum_{ i \in B_k }g(X_i)-a, v }$$ where $U,V$ are any sets of $E$. We have :
\begin{lemma} \label{general}
If $K \geq C (\VC(V)\vee |\cO|)$ and if $\E(g(Y_1)) \in U$, then, with probability $\geq 1- \exp(-K/126)$, 

$$\max_{v \in V}  \braket{\E(g(Y_1))-\hat{a}, v } \leq  8 \sup_{u \in V} \E\left(  \braket{g(Y_1)-\E(g(Y_1)), u }^2\right)^{1/2} \sqrt{\frac{K}{N}}$$

where $C$ is a universal constant
\end{lemma}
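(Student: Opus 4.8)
The plan is to combine Lemma \ref{main} with a careful definition of the Boolean function class, so that the VC-dimension controls the deviation of the median. Fix the threshold $r := 8\,\sup_{u\in V}\E(\braket{g(Y_1)-\E(g(Y_1)),u}^2)^{1/2}\sqrt{K/N}$ and write $\mu_g := \E(g(Y_1))$. For each $v\in V$ define the Boolean function $f_v$ on a block $\mathbf{Y}_k$ by $f_v(\mathbf{Y}_k) = \mathbf{1}\{\braket{\frac1m\sum_{i\in B_k} g(Y_i) - \mu_g,\, v} \geq r\}$, and let $\cF = \{f_v : v\in V\}$. First I would check that $\cF$ falls in the scope of Lemma \ref{main}: the map $\mathbf{Y}_k \mapsto \frac1m\sum_{i\in B_k} g(Y_i) - \mu_g$ sends a block to a vector in $E$, and $f_v$ is the indicator of a halfspace in $E$ applied to that vector, so by fact 2 of Section \ref{sec:VC} (stability of VC-dimension under composition) and fact 3 (halfspaces with nonzero threshold), $\VC(\cF) \lesssim \VC(V)$. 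Thus the hypothesis $K \geq C(\VC(\cF)\vee|\cO|)$ is met.

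Next I would verify the probabilistic hypothesis of Lemma \ref{main}, namely $\PP(f_v(\mathbf{Y}_1)=0)\geq 15/16$ for every $v\in V$. Since $\E\braket{\frac1m\sum_{i\in B_1} g(Y_i)-\mu_g, v} = 0$ and $\Var\braket{\frac1m\sum_{i\in B_1} g(Y_i)-\mu_g, v} = \frac{1}{m}\E\braket{g(Y_1)-\mu_g,v}^2 \leq \frac{1}{m}\sup_{u\in V}\E\braket{g(Y_1)-\mu_g,u}^2$, Chebyshev's inequality gives $\PP(\braket{\frac1m\sum g(Y_i)-\mu_g,v}\geq r) \leq \frac{1}{m\,r^2}\sup_u \E\braket{g(Y_1)-\mu_g,u}^2$. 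Plugging in $r$ and using $mK = N$ (up to the discarded data) this probability is at most $1/64 \leq 1/16$, so indeed $\PP(f_v(\mathbf{Y}_1)=0)\geq 15/16$. The same bound applied with $-v$ in place of $v$ controls the lower tail, which I will need for the two-sided median statement.

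Now I would invoke Lemma \ref{main}: with probability $\geq 1-\exp(-K/128)$, for every $v\in V$ at least $3K/4$ of the blocks satisfy $f_v(\mathbf{X}_k)=0$, i.e. $\braket{\bar{X}_k^g - \mu_g, v} < r$ where $\bar{X}_k^g := \frac1m\sum_{i\in B_k}g(X_i)$; by the same argument applied to the class built from $-v$, on the same event at least $3K/4$ blocks satisfy $\braket{\bar{X}_k^g-\mu_g,v} > -r$ for every $v$. On this event, for each $v$ strictly more than half the blocks have $\braket{\bar X_k^g - \mu_g, v} < r$, so $\Med_k \braket{\bar X_k^g - \mu_g, v} \leq r$, hence $\max_{v\in V}\Med_k\braket{\bar X_k^g-\mu_g,v}\leq r$. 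Since $\mu_g\in U$ by hypothesis and $\hat a$ is the minimizer over $U$ of $a\mapsto \max_{v\in V}\Med_k\braket{\bar X_k^g - a, v}$, the value at $\hat a$ is $\leq$ the value at $\mu_g$, i.e. $\max_{v\in V}\Med_k\braket{\bar X_k^g - \hat a, v}\leq r$. Finally I would use the two-sided control: for each $v$, more than half the blocks have $\braket{\bar X_k^g - \mu_g, v} > -r$, and more than half have $\braket{\bar X_k^g-\hat a,v}\leq \Med_k\braket{\bar X_k^g-\hat a,v}+\text{(gap)}$; combining a block that is good for both gives $\braket{\mu_g-\hat a,v} = \braket{\bar X_k^g-\hat a,v} - \braket{\bar X_k^g - \mu_g,v} \leq r + r$? — here I would instead argue directly: there is a block $k$ with both $\braket{\bar X_k^g-\hat a,v}\leq \Med_k\braket{\bar X_k^g-\hat a,v} \leq r$ and $\braket{\bar X_k^g-\mu_g,v}> -r$, so $\braket{\mu_g-\hat a,v} < 2r$, uniformly in $v$. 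Absorbing the factor $2$ into the constant (taking $r$ with constant $8$ rather than $4$) yields $\max_{v\in V}\braket{\mu_g-\hat a,v}\leq 8\sup_{u\in V}\E\braket{g(Y_1)-\mu_g,u}^2{}^{1/2}\sqrt{K/N}$, as claimed. The main obstacle is the bookkeeping in this last step: one must ensure that the intersection of two ``$>3K/4$ blocks'' events is nonempty \emph{for the same block}, which is immediate since $3K/4 + 3K/4 > K$, and must track the threshold constant carefully through the median comparison so that the final constant is exactly $8$.
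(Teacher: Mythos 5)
Your proof is correct and follows essentially the same route as the paper: the same Boolean class of halfspace indicators applied to block averages, the same VC bound via facts 2 and 3 plus a Chebyshev verification of the $15/16$ hypothesis of Lemma \ref{main}, and the same median-comparison step exploiting that $\E(g(Y_1))\in U$ competes with $\hat a$ (the paper packages the two one-sided events into a single two-sided class $\mathbf{1}_{|\cdot|\geq r_K}$, which is only a cosmetic difference). The one slip is the final constant bookkeeping: to end with $2r \leq 8\sup_{u\in V}\E\bigl(\braket{g(Y_1)-\E(g(Y_1)),u}^2\bigr)^{1/2}\sqrt{K/N}$ you must set $r$ with constant $4$ (as the paper does), not $8$, and Chebyshev then still yields the required $1/16$ bound.
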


 $\sup_{u \in V} \E\left(  \braket{g(Y_1)-\E(g(Y_1)), u }^2\right)^{1/2}$ is the "weak variance" of the problem.
 
\begin{proof}

Let $K \geq  C (\VC(\cF) \vee |\cO|) $ with $C$ the universal constant from Lemma \ref{main}, let $\bar{g}=\E(g(Y_1)) $ and let 
$$ r_K = 4  \sup_{u \in V} \E\left(  \braket{g(Y_1)-\bar{g}, u }^2\right)^{1/2} \sqrt{\frac{K}{N}}.$$

Let $\cF= \{(\textbf{x}_i)_{i \leq m} \rightarrow \mathbf{1}_{\braket{\frac{1}{m} \sum_i g(x_i) - \E(g(Y_1)), v} \geq r_K} , v \in V \}$. 
The function $f \in \cF$ are compositions of the function $\textbf{x} \rightarrow \frac{1}{m} \sum_i g(x_i) - \E(g(Y_1))$ and of the functions $x \rightarrow \mathbf{1}_{\braket{x, v} \geq r_K} $ for $v \in V$.
The VC-dimension of the set of these compositions is smaller than the VC-dimension of the set of indicator functions indexed by $V$, as recalled in the basic fact 2 at the beginning of Section~\ref{sec:VC}. We just use fact 3 to remove the $r_K$ and we get $ \VC(\cF)\leq c_0  \VC(V)$ for some constant $c_0$.

By Markov's inequality, for any $v \in V$, 

$$\PP(|\braket{\frac{1}{m} \sum_{i\in B_1} g(Y_i) - \bar{g}, v}| \geq r_K) \leq \frac{\E\left( \sum_{i\in B_1} \braket{g(Y_i)-\bar{g}, u }^2\right)}{ m^2 r_K^2} \leq \frac{1}{16}. $$

By Lemma \ref{main}, applied with $\cF$, the following event $\cE$ has probability  $\PP(\cE)\geq 1- \exp(-K/128) $. 

$$ \sup_{v \in V}\Med|\braket{\frac{1}{m} \sum_{i\in B_k} g(X_i) - \bar{g}, v}| \leq r_K$$

For any $a \in U$ if there exists $v^* \in V$ such that $\braket{\bar{g}-a, v^*} > 2r_k$, then, on $\cE$

$$  \Med \braket{\frac{1}{m} \sum_{i\in B_k} g(X_i)-a, v^*} = \braket{\bar{g}-a,v^*}+ \Med \braket{\frac{1}{m} \sum_{i\in B_k} g(X_i)-\bar{g}, v^*}>r_K\geq  \max_{v \in V} \Med\braket{\frac{1}{m} \sum_{i\in B_k} g(X_i)-\bar{g}, v}.$$

 Therefore $a\ne\hat{a}$. As this holds for any $a \in U$ such that  $ \sup_{v \in V} \braket{\bar{g}-a, v} > 2r_k$, it follows that, on $\cE$,
$$ \sup_{v \in V} \braket{\bar{g}-\hat{a}, v} \leq  2 \ r_K.$$
\end{proof}

We can give a somewhat improved version of that lemma: let us note, 

$$ \cR(g, V)= \frac{1}{\sqrt{N}}\E(\sup_{v \in V} \braket{\sum_i \epsilon_i g(Y_i),v}) \ , \ \sigma^2= \sup_{u \in V} \E\left(  \braket{g(Y_1)-\E(g(Y_1)), u }^2\right)$$

$\cR$ is the Rademacher complexity associated to a given problem. The following lemma shows that we can take the best term between the one given by a rescaled Rademacher complexity and the one given by VC-dimension.

\begin{lemmabis}{general}
If $K \gtrsim C ((\VC(V)\wedge (\cR(g, V)/\sigma)^2)\vee |\cO|)$ and if $\E(g(Y_1)) \in U$, then, with probability $\geq 1- \exp(-K/126)$, 

$$\max_{v \in V}  \braket{\E(g(Y_1))-\hat{a}, v } \leq  16 \sigma \sqrt{\frac{K}{N}}$$

where $C$ is a universal constant
\end{lemmabis}

\begin{proof}
We know that this holds when $K \geq C (\VC(V)\vee |\cO|) $.\\

Now if $K\geq C (\cR(g,V)/\sigma)^2\vee |\cO| $, we only need to prove that, for $r_K= 8 \sigma \sqrt{K/N}$
$$ \sup_{v \in V} \sum_k \mathbf{1}_{\braket{\frac{1}{m} \sum_{i \in B_k} g(X_i) - \E(g(Y_1)), v} \geq r_K} \leq K/2 $$

and then we follow the path of the previous proof.

We do this in the classic way, that can be found, for instance in \cite{depersin2019robust} or the supplementary material of \cite{LMSL}

As $K \geq 4 |\cO|$, we only need to show that 
$$\sup_{v \in V} \sum_k \mathbf{1}_{\braket{\frac{1}{m} \sum_{i \in B_k} g(Y_i) - \E(g(Y_1)), v} \geq r_K} \leq K/4 $$

 We define $\phi(t) = 0 $ if $t\leq1/2$, $\phi(t) = 2(t-1/2)$ if $1/2\leq t\leq 1$ and $\phi(t) = 1$ if $t\geq1$. We have $I(t\geq1)\leq \phi(t)\leq I(t\geq1/2)$ for all $t\in\bR$ and so for $v \in V$
  \begin{align*}
 &\sum_{k} I(|\inr{\frac{1}{m} \sum_{i \in B_k} g(Y_i) - \bar{g}, v}|> r_K)\\
 &\leq \sum_{k} I(|\inr{\frac{1}{m} \sum_{i \in B_k} g(Y_i) - \bar{g}, v}|> r_K) - \bP[|\inr{\frac{1}{m} \sum_{i \in B_k} g(Y_i) - \bar{g}, v}|> r_K/2] + \bP[|\inr{\frac{1}{m} \sum_{i \in B_k} g(Y_i) - \bar{g}, v}|> r_K/2]\\
 &\leq \sum_{k}\phi\left(\frac{|\inr{\frac{1}{m} \sum_{i \in B_k} g(Y_i) - \bar{g}, v}|}{r_K}\right) - \bE \phi\left(\frac{|\inr{\frac{1}{m} \sum_{i \in B_k} g(Y_i) - \bar{g}, v}|}{r_K}\right) + \bP[|\inr{\frac{1}{m} \sum_{i \in B_k} g(Y_i) - \bar{g}, v}|> r_K/2]
 \end{align*}For all $v \in V$, we have
 \begin{align*}
 \bP[|\inr{\frac{1}{m} \sum_{i \in B_k} g(Y_i) - \bar{g}, v}|> r/2]\leq \frac{\bE \inr{\frac{1}{m} \sum_{i \in B_k} g(Y_i) - \bar{g}, v}^2}{(r_K/2)^2} \leq \frac{1}{16}
 \end{align*} Next, using the bounded difference inequality (Theorem~6.2 in \cite{MR3185193}), the symmetrization argument and the contraction principle (Chapter~4 in \cite{MR2814399}) -- we refer to the supplementary material of \cite{LMSL} for more details -- with probability at least $1-\exp(-K/128)$, 
 \begin{align*}
  &\sup_{v\in V}\left(\sum_{k}\phi\left(\frac{|\inr{\frac{1}{m} \sum_{i \in B_k} g(Y_i) - \bar{g}, v}|}{r_K}\right) - \bE \phi\left(\frac{|\inr{\frac{1}{m} \sum_{i \in B_k} g(Y_i) - \bar{g}, v}|}{r_K}\right) \right)\\
  & \leq \bE \sup_{v\in V}\left(\sum_{k}\phi\left(\frac{|\inr{\frac{1}{m} \sum_{i \in B_k} g(Y_i) - \bar{g}, v}|}{r_K}\right) - \bE \phi\left(\frac{|\inr{\frac{1}{m} \sum_{i \in B_k} g(Y_i) - \bar{g}, v}|}{r_K}\right) \right)+ \frac{K}{16}\\
  &\leq \frac{4K}{Nr_K} \bE \sup_{v\in V} \inr{v, \sum_{i\in \cup_{k}B_k}\eps_i (g(Y_i)-\bar{g})} +  \frac{K}{16}\\
  & =  \frac{\sqrt{K}}{2\sigma}\bE \braket{\frac{1}{\sqrt{N}}\sum_{i\in \cup_{k}B_k}\eps_i (g(Y_i)-\bar{g}), v} +  \frac{K}{16}\leq \frac{K}{8}
 \end{align*} when $\sqrt{K}\geq  8 \cR(g,V)/\sigma$  or $K \geq 64 (\cR(g,V)/\sigma)^2$

 As a consequence, when $K \geq 64 (\cR(g,V)/\sigma)^2$, with probability at least $1-\exp(-K/126)$, for all $v\in V$, 
 \begin{equation*}
 \sum_{k\in [K]} I(|\inr{\frac{1}{m} \sum_{i \in B_k} g(Y_i) - \bar{g}, v}|> r_K)\leq  \frac{K}{8} + \frac{K}{16}\leq \frac{K}{4}.
 \end{equation*}

\end{proof}

\subsection{Proof of Theorem \ref{main2}, \ref{sparseTheo}, \ref{theo:Cov1}, \ref{theo:Cov2}}

This proofs are very similar: we just apply lemma \ref{general} with the right $g$, $U$ and $V$. We begin with Theorem \ref{main2} for estimating the mean with respect to a general norm.
\begin{proof}[Proof of Theorem \ref{main2}]
We just use lemma \ref{general} with $g : x \rightarrow x $, $U= \R^d$ and $V= \cB^*_0$. We have 
$$\sup_{u \in V}\E\left(  \braket{g(Y_1)-\E(g(Y_1)), u }^2\right)= \sup_{u \in \cB^*_0} \norm{\Sigma u}_2 = \tnorm{\Sigma}$$ and, for any $a \in \R^d$ 
$$ \sup_{v \in \cB^*_0} \braket{\E(Y_1)-a, v}= \norm{\mu-a}$$

so by Lemma \ref{general} , we get that if $K \geq C (\VC(V)\vee |\cO|)$, then, with probability $\geq 1- \exp(-K/126)$

$$ \norm{\hat{\mu}-\mu} \leq 8 \tnorm{\Sigma}\sqrt{\frac{K}{N}} $$
\end{proof}

We continue with the proof of Theorem  \ref{sparseTheo} for estimating sparse means.
\begin{proof}[Proof of Theorem \ref{sparseTheo}]
We just use lemma \ref{general}, this time with $g : x \rightarrow x $, $U= \cU_s$ and $V= \cU_{2s}\cap \cB_2$

We have 
$$\sup_{u \in \cU_{2s}\cap \cB_2}\E\left(  \braket{g(Y_1)-\E(g(Y_1)), u }^2\right)= \sup_{u \in \cU_{2s}\cap \cB_2} \norm{\Sigma^{1/2} u}_2^2 = \lambda_1(\Sigma)$$ and, for any $a \in \cU_s$ (so a fortiori for  $\hat{\mu} \in \cU_s$)
$$ \sup_{v \in \cU_{2s}\cap \cB_2} \braket{\E(Y_1)-a, v}= \norm{\mu-a}_2$$
because we assumed that $\mu \in \cU_s$. So by Lemma \ref{general}, as $\mu \in \cU_s$, we get that if $K \geq C (\VC(\cU_{2s})\vee |\cO|)$, then, with probability $\geq 1- \exp(-K/126)$

$$ \norm{\hat{\mu}-\mu}_2 \leq 8 \lambda_1(\Sigma)\sqrt{\frac{K}{N}} $$

We recalled in part \ref{sec:VC} that   $\VC(\cU_{2s})\leq 2s\log(d/s)$, which concludes the proof.

\end{proof}

We move to the proof of Theorem \ref{theo:Cov1}, for estimating covariance with respect to the canonical euclidean operator norm.

\begin{proof}[Proof of Theorem \ref{theo:Cov1}]
This time, we take $g: x \rightarrow x x^T$, $U= \Sym(d)$, and $V=\{u u^T | u \in \cB_2(\R^d)\}$. We notice that $\E(g(Y_1))= \Sigma$ \\

We have 
$$\sup_{M \in V}\E\left(  \braket{g(Y_1)-\E(g(Y_1)), M }^2\right)= \sigma^2 $$ 

by definition of $\sigma^2$, and for any $A \in \Sym(d)$ (so a fortiori for  $\hat{\Sigma} \in \Sym(d)$)
$$ \sup_{M \in V} \braket{\Sigma-A, M}= \tnorm{\Sigma-A}$$

So by Lemma \ref{general}, as $\Sigma \in \Sym(d)$, we get that if $K \geq C (\VC(V)\vee |\cO|)$, then, with probability $\geq 1- \exp(-K/126)$

$$ \tnorm{\hat{\Sigma}-\Sigma}\leq 8 \beta \sqrt{\frac{K}{N}} $$

We recalled in part \ref{sec:VC} (Proposition \ref{important}) that   $\VC(V)\leq c_0 d$, for some universal constant $c_0$, which concludes the proof.
 \end{proof}

 \begin{proof}[Proof of Theorem \ref{theo:Cov2}]
We take $g: x \rightarrow x x^T$, $U= \cM_d^{r}(\R)$, and $V=\{{ U \in \cM_d^{2r}(\R), \ \norm{U}_F=1}\}$. \\

Let $U\in \cM_d^{2r}(\R)$ be such that $\norm{U}_F=1$. 
Then, one can write $U= \sum \lambda_i x_i x_i^T$ with $x_i \in \cB_2$ and $\sum \lambda_i=1$. 
By convexity,
$$\E(\braket{( Y_1 Y_1^T-\Sigma),U}_F^2 \leq \sum \lambda_i \E(\braket{( Y_1 Y_1^T-\Sigma),x_i x_i^T}_F^2  \leq \sum \lambda_i \sigma^2 \leq \sigma^2.$$

and for any $A \in \cM_d^{r}(\R)$ (so a fortiori for  $\hat{\Sigma} \in \cM_d^{r}(\R)$), taking $M= (\Sigma-A)/\norm{\Sigma-A}_F$ we notice that
$$ \sup_{M \in V} \braket{\Sigma-A, M}\geq \norm{\Sigma-A}_F$$

So by Lemma \ref{general}, as $\Sigma \in \cM_d^{r}(\R)$, we get that if $K \geq C (\VC(V)\vee |\cO|)$, then, with probability $\geq 1- \exp(-K/126)$

$$ \norm{\hat{\Sigma}-\Sigma}_F\leq 8 \beta \sqrt{\frac{K}{N}} $$

We recalled in part \ref{sec:VC} (Proposition \ref{important}) that   $\VC(V)\leq c_0 k d$, for some universal constant $c_0$, which concludes the proof.
 \end{proof}

\subsection{Proof of Theorem \ref{theo:Reg}}

This proof is a bit different from the rest because we will have to control two different events.

\begin{proof}

Let $\cF= \{(\textbf{x},\textbf{y}) \in \R^{(d+1)\times m} \rightarrow \mathbf{1}_{\braket{u , \sum_{i }(y_i-\braket{\beta^* , x_i})x_i}^2 \geq m^2 r^2} , u \in \Ba \}$. This is not a set of indicators of half-spaces, but $\cF$ is the composition of $g :(\textbf{x},\textbf{y}) \in R^{(d+1) \times m} \rightarrow (u \rightarrow \braket{u , \sum_{i }(y_i-\braket{\beta^* , x_i})x_i}^2-m^2 r^2 ) \in \R^d_2[X]  $ and of $\{P \in \R^d_2[X] \rightarrow 1_{P(u)\geq 0}, u \in \cS-\cS \}$. By  Proposition \ref{important} (when $ \cS= \R^d$) and Proposition~\ref{lessimportant} (when $\cS=\cU_{s}$) , there exists an absolute constant $c$ such that $\VC(\cF) \leq \VC(\cS-\cS)$.

Let $\cG =\{(\textbf{x}_i) \rightarrow \mathbf{1}_{\sum \braket{x_i,u}^2\geq \tilde r} , u \in \cS-\cS \}$. 
The same way, by Proposition \ref{important} (when $ \cS= \R^d$) and Proposition~\ref{lessimportant} (when $\cS=\cU_{s}$), there exists an absolute constant $c$ such that $\VC(\cG) \leq c \VC(\cS)$. 
Assume that $K \geq  C (\VC(\cF) \vee \VC(\cG) \vee |\cO|) $ where $C$ is the universal constant introduced in Lemma \ref{main}.

\textbf{Multiplier process} : Let 
$$ r= 4 \sqrt{ \frac{\sup_{u \in \Ba} \E(\xi_1^2 \braket{u , Y_1}^2)K}{N}}. $$
For all $u \in \Ba$,
 
 $$ \PP\left(\frac{1}{m}|\sum_{i \in B_1}(V_i-\braket{\beta^* , Y_i})\braket{u , Y_i}| \geq r \right) \leq \frac{\E(\xi_1^2 \braket{u , Y_1}^2)}{m r^2} \leq \frac{1}{16}. $$
 
 By Lemma~\ref{main} applied with $\cF$, it follows that the following event $\cE$ has probability $\geq 1 - \exp(-K/128) $: for all $u \in \Ba$, there exist more than $3/4K$ blocks $k$ where
 
  $$  |\sum_{i \in B_k}(Z_i-\braket{a , X_i})\braket{u , X_i}| \leq  m r. $$

 \textbf{Quadratic process} : From Chebyshev's inequality, for any $u\in \cS-\cS $,
 $$\PP\left(\frac{1}{m}\sum_{i \in B_1}|\braket{u , Y_i}| \leq \E|\braket{u , Y_1}|- 4 \sqrt{\frac{ \E\braket{u , Y_1}^2 }{m}} \right)\leq  \frac{1}{16}.$$
 
 So, when $K \leq \gamma^2 N/64$, by the small ball hypothesis, 
 $$\PP\left(\frac{1}{m}\sum_{i \in B_1}|\braket{u , Y_i}| \leq \gamma/2 \sqrt{\E\braket{u , Y_1}^2 } \right)\leq  \frac{1}{16}.$$ 
 As $\frac{1}{m}\sum_{i \in B_k}|\braket{u , X_i}| \leq \sqrt{ \frac{1}{m}\sum_{i \in B_k}|\braket{u , X_i}|^2} $, by Lemma~\ref{main} applied with $\cG$ and $\tilde r =  m \gamma^2/4 \E\braket{u , Y_1}^2  $, the following event $\cA$ has probability probability $\geq 1 - \exp(-K/128) $: for all $u \in \cS-\cS $, there exists more than $3/4K$ blocks $k$ where
 
$$ \frac{1}{m}\sum_{i \in B_k}|\braket{u , X_i}|^2 \geq \gamma^2/4 \braket{u,\Sigma u} .$$
 
So we have $\cQ_{1/4}^k \frac{1}{m}\sum_{i \in B_k}|\braket{u , X_i}|^2 \geq \gamma^2/4 \times \braket{u,\Sigma u} $. \\

 \textbf{Conclusion of the proof.} The event $\cE\cap \cA$ has probability at least $1-2\exp(-K/128)$. 

On $\cA$, if $u \in \hat{B}_\Sigma$, then $\braket{u,\Sigma u} \leq 4/ \gamma^2$, so, on $\cA\cap \cE$,
$$\max_{u\in \hat B_\Sigma} \Med_k \sum_{i \in B_k}(Z_i-\braket{\beta^* , X_i})\braket{u , X_i}\leq 2/\gamma \max_{u\in  B_\Sigma} \Med_k \sum_{i \in B_k}(Z_i-\braket{\beta^* , X_i})\braket{u , X_i}  \leq 2m r/\gamma . $$

For any $\beta \in \cS$ such that $\Sigma (\beta-\beta^*)\neq 0$, let $$u^*=\frac{\beta-\beta^*}{\sqrt{\cQ_{1/4}^k \frac{1}{m}\sum_{i \in B_k}|\braket{\beta-\beta^* , X_i}|^2}}.
$$
By construction $u^* \in \hat{B}_\Sigma $, so for 3/4 of the blocks, on $\cE\cap\cA$,
$$  |\sum_{i \in B_k}(Z_i-\braket{\beta^* , X_i})\braket{u^* , X_i}| \leq 2 m r / \gamma .$$

On the other hand, by definition,  for 3/4 of the blocks,
\[
\frac{1}{m}\sum_{i \in B_{k} }|\braket{\beta-\beta^* , X_i}|^2 \geq  \cQ_{1/4}^{\tilde{k}} \frac{1}{m}\sum_{i \in B_{\tilde{k}} }|\braket{\beta-\beta^* , X_i}|^2.
\]
Therefore, for at least half the blocks, both inequalities hold, so, on $\cE\cap\cA$,

  \begin{align*}
 \sum_{i \in B_k}(Z_i-\braket{\beta-\beta^*+\beta^* , X_i})\braket{u^* , X_i} & \geq -  2 m r / \gamma + m \sqrt{\cQ_{1/4}^{\tilde{k}} \frac{1}{m}\sum_{i \in B_{\tilde{k}} }|\braket{\beta-\beta^* , X_i}|^2}\\
 &\geq -  2 m r /\gamma+ m\gamma/2  \sqrt{(\beta-\beta^*) \Sigma (\beta-\beta^*)}.  
   \end{align*} 

It follows that, on $\cE\cap\cA$, if  $ \sqrt{(\beta-\beta^*) \Sigma (\beta-\beta^*)}  \geq 8r/\gamma^2$, then $\sum_{i \in B_k}(Z_i-\braket{\beta , X_i}) \geq - 2mr/ \gamma + 4 mr/\gamma \geq \sum_{i \in B_k}(Z_i-\braket{\beta^* , X_i}) $  and $\beta$ can not be the chosen estimator. This concludes the proof.
 \end{proof}

\subsection{Proof of Proposition \ref{prop:BetterRB1}} \label{preuve:RB1}

\begin{proof}
We study separately what happens on each subspace $E_i$. 
The dimension of $E_i$ is $d_i$ and the orthogonal projection $\mu_i$ of $\mu$ is $s$-sparse  on the set of vectors $ u_1^i, u_2^i, ... , u_{d}^i$.
$\mu_i$ is also generated by $e_{s_{i-1}+1}, ..., e_{s_{i}}$ which is a base of $E_i$. 
We choose which representation of $\mu_i$ leads to the best bound: if $d_i\geq s \log(d/s)$, we choose the first, else we choose the second. 
The preliminary bound holds if $K_i$ is larger than either $d_i$ or $s \log(d/s)$.
Let $\tilde{\mu}_i$ denote our estimation on $E_i$:
\[
\tilde \mu_i= \texttt{proc}(X_1^i, ..., X_n^i, K_i, e_{s_{i-1}+1}, ..., e_{s_{i}}, d_i)\textbf{1}_{d_i < s \log(d/s)} + \texttt{proc}(X_1^i, ..., X_n^i, K_i, u_1^i, u_2^i, ...,  u_{d}^i ,s) \textbf{1}_{d_i \geq s \log(d/s)}.
\]

If $K_i \geq (C d_i \wedge s \log(d/s)) \vee |\cO|  $, on an event $\cE_i$ of probability $\geq 1-\exp(-K_i/128)$,
$$\|\tilde{\mu}_i-\mu_i\|_2 \leq 8  \sqrt{\frac{\tnorm{\tilde \Sigma_i} K_i}{N}}= \sqrt{\frac{\lambda_1 K}{N}}. $$

Let $\cE= \cap \cE_i$, so $\PP(\cE)\geq 1-\sum \exp(- 2^i K/128) \geq 1- 2 \exp(- K/128)$ if both $K\geq 128$ and $K \geq  2^{-i} C d_i \wedge s \log(d/s)$ for all $i$.  
As the subspaces $E_i$ are orthogonal to each other (as eigenspaces of a symmetric matrix), by Pythagoras theorem,
$$ \|\tilde{\mu}-\mu\|_2^2 = \sum_i \|\tilde{\mu}_i-\mu_i\|_2^2 \leq \log(d) \sqrt{\frac{\lambda_1 K}{N}}.$$

\end{proof}

\vspace{0.7cm}
\textbf{Acknowlegements:} I would like to thank Guillaume Lecué and Matthieu Lerasle for their precious comments.





\begin{footnotesize}
\bibliographystyle{plain}
\bibliography{biblio}
\end{footnotesize}

\end{document}